%% file: main.tex
\documentclass[11pt]{article}

\usepackage[textwidth=6.35in,top=0.82in,bottom=0.92in]{geometry}
\usepackage{amsmath,amssymb,amsthm,mathtools}
\usepackage{booktabs,tabularx,array,multirow}
\usepackage{graphicx}
\usepackage{float}
\usepackage{placeins}
\usepackage{microtype}
\usepackage{enumitem}
\usepackage[dvipsnames]{xcolor}
\usepackage{tcolorbox}
\usepackage{tikz}
\usetikzlibrary{arrows.meta,positioning,fit,calc,shapes.geometric,backgrounds}
\definecolor{TargetBlue}{RGB}{221,235,250}
\definecolor{LoraOrange}{RGB}{255,231,199}
\definecolor{TheoryGreen}{RGB}{220,241,224}
\definecolor{OutsideGray}{RGB}{237,239,242}
\definecolor{WarningRed}{RGB}{245,220,220}
\newtcolorbox[auto counter]{examplebox}[2][]{
  colback=TargetBlue!28,
  colframe=blue!35!black,
  boxrule=0.4pt,
  arc=1pt,
  left=6pt,
  right=6pt,
  top=5pt,
  bottom=5pt,
  before skip=8pt,
  after skip=8pt,
  fonttitle=\bfseries,
  title={Example~\thetcbcounter: #2},
  #1
}
\usepackage[colorlinks=true,linkcolor=blue,citecolor=blue,urlcolor=blue]{hyperref}

\makeatletter
\long\def\@makecaption#1#2{%
  \vskip\abovecaptionskip
  {\small\leftskip=0pt\rightskip=0pt\parfillskip=0pt plus 1fil%
   \parindent=0pt\noindent\textbf{#1:} #2\par}%
  \vskip\belowcaptionskip}
\makeatother

\newtheorem{theorem}{Theorem}[section]
\newtheorem{proposition}[theorem]{Proposition}
\newtheorem{lemma}[theorem]{Lemma}
\newtheorem{corollary}[theorem]{Corollary}
\theoremstyle{definition}

\theoremstyle{remark}
\newtheorem{remark}[theorem]{Remark}

\newcommand{\R}{\mathbb{R}}
\newcommand{\E}{\mathbb{E}}
\newcommand{\KL}{\mathrm{KL}}
\newcommand{\rank}{\operatorname{rank}}
\newcommand{\softmax}{\operatorname{softmax}}
\newcommand{\diag}{\operatorname{diag}}
\newcommand{\col}{\operatorname{col}}
\newcommand{\row}{\operatorname{row}}
\newcommand{\op}{\mathrm{op}}
\newcommand{\eff}{\mathrm{eff}}
\newcommand{\core}{\mathrm{core}}
\newcommand{\MH}{\mathrm{MH}}
\newcommand{\QK}{\mathrm{QK}}
\newcommand{\F}{\mathrm{F}}
\newcommand{\trans}{\mathsf{T}}
\newcommand{\one}{\mathbf{1}}
\newcommand{\PiC}[1]{\Pi_{#1}}
\newcommand{\cmark}{\textcolor{ForestGreen}{\ensuremath{\checkmark}}}
\newcommand{\xmark}{\textcolor{BrickRed}{\ensuremath{\times}}}

\title{How Much Rank Does LoRA Need? Rank--Error Bounds for Transformer Attention}
\author{Gerard Conangla Planes\thanks{Correspondence: \href{mailto:gerardpc@gmail.com}{\texttt{gerardpc@gmail.com}}}\\[2pt]\normalsize Aily Labs}
\date{}

\begin{document}
\maketitle

\begin{abstract}
Choosing the rank of a low-rank adaptation (LoRA) update is usually an empirical task. In this paper, we provide a task-dependent theory of the approximation error achievable at each LoRA rank for Transformer attention. We fix a pretrained attention head, a target attention function, and a distribution over inputs from the downstream task, and bound the smallest expected Kullback--Leibler (KL) error achievable by a rank-$r$ query LoRA update. When target attention probabilities are bounded away from zero, we prove a lower bound of the error proportional to $\psi(\|d\|_2)$, where $d$ is the difference between candidate and target attention scores and $\psi(t)=\min\{t^2,t\}$. We also prove an unconditional upper bound $\min\{\|d\|_2^2/4,\sqrt2\|d\|_2\}$. Under explicit realizability, geometry, and moment conditions, we then bound the best rank-$r$ error between an explicit multiple of $\psi(\sqrt{T_r})$ and $\min\{T_r/4,\sqrt{2T_r}\}$, where $T_r$ is the downstream-weighted tail energy of the target update. We also provide target-Fisher bounds when candidate scores remain within a fixed range of the target scores, and an unrestricted lower bound when a subset of tokens carries most of the probability mass. These spectral bounds describe finite-score approximation. We then construct explicit families in which softmax saturation makes the rank required to match the attention function strictly smaller than the rank required to match the finite logits. Finally, we extend the analysis to fused multi-head LoRA and joint query/key updates, exposing the effects of rank sharing and query/key factorization constraints.
\end{abstract}

\section{Introduction}

LoRA~\cite{hu2022lora} makes it cheaper to fine-tune a pretrained
Transformer~\cite{vaswani2017attention} by replacing a dense weight update
with a low-rank one.  Such a low rank is both an advantage and a restriction,
reducing the number of trainable parameters and computation, but also limiting
the representational power of the adapter. In practice, rank is commonly
chosen using rules of thumb or by training several adapters of different rank
and comparing their downstream performance
\cite{hu2022lora,zhang2023adalora,schulman2025lora}.  Such a sweep reveals
which rank worked in a particular run, but it doesn't tell us whether a smaller
adapter lacked capacity or was simply harder to train.

To isolate the representation question, suppose a dense or high-rank target
adapter has already been obtained.  We then ask: for each rank budget $r$, how
closely can an adapter reproduce the target on inputs from the downstream task?
The answer cannot be read from the singular values of the target weight update
$\Delta W$ alone. For instance, a large singular direction may have no effect if inputs from
the task never activate it, while a smaller direction may matter on nearly
every input because it repeatedly changes which tokens receive attention.
The softmax function in the attention mechanism introduces two further effects.
First, adding the same constant to
every score leaves the attention probabilities unchanged.  Second, softmax
saturates, so a score error with large magnitude need not produce a proportionally large
attention error.  The rank needed to reproduce the target function can
therefore differ from the rank suggested by the raw singular values of its
weight update.

In this paper, we begin with a LoRA update to the query projection of a single attention
head. That update changes the scores assigned to the available keys, which
softmax converts into attention probabilities.
We compare these probabilities with those of a dense or high-rank target adapter, ask
for the smallest expected KL error achievable at rank \(r\), and obtain bounds on the
error, some unconditional and some conditional on explicit assumptions.
Attention KL therefore measures how much the query update changes the attention
distribution itself, before later layers can modify that change.

We stress that our results bound the attention KL described above, but we do not
provide bounds on the head output or final task loss. To do so would require additional assumptions
about the architecture, including the value and output projections, residual connections,
feed-forward network, and later layers. These components determine how a difference
in one head is propagated and may amplify, suppress, or cancel it. Without such
architecture-specific assumptions, attention KL alone does not determine output or task error.
Figure~\ref{fig:scope} shows the scope of our analysis within a Transformer block in more detail,
including extensions to fused multi-head and joint query/key LoRA.

\begin{figure}[!h]
\centering
\resizebox{0.92\textwidth}{!}{%
\begin{tikzpicture}[
  font=\small,
  >=Latex,
  box/.style={draw,rounded corners=2pt,minimum height=10mm,align=center,inner sep=4pt},
  arr/.style={->,thick},
  input/.style={box,fill=TargetBlue},
  lora/.style={box,fill=LoraOrange},
  comp/.style={box,fill=black!3},
  outside/.style={box,fill=OutsideGray,dashed},
  note/.style={draw=ForestGreen,rounded corners=2pt,fill=ForestGreen!3,
    align=left,inner sep=4pt,font=\small},
  legend/.style={box,minimum height=5mm,text width=32mm,font=\scriptsize},
  theorykey/.style={draw=ForestGreen,rounded corners=2pt,fill=ForestGreen!3,
    minimum height=5mm,text width=39mm,align=center,font=\scriptsize}
]
\node[font=\large\bfseries] (Title) at (0,0)
 {Transformer attention with LoRA: computation and theoretical scope};

\node[input,text width=46mm] (Xin) at (0,-1.25)
 {$X(u)\in\R^{n(u)\times d}$\\input token representations};

\node[lora,text width=48mm] (Q) at (-5.2,-3.35)
 {\textbf{Adapted query projection}\\
  $\widetilde Q_h=Q_0+A_h$\\[-1pt]
  $Q_h(u)=X(u)\widetilde Q_h^\trans$};
\node[lora,text width=48mm] (K) at (0,-3.35)
 {\textbf{Adapted key projection}\\
  $\widetilde K_h=K_0+B_h$\\[-1pt]
  $K_h(u)=X(u)\widetilde K_h^\trans$};
\node[comp,text width=48mm] (V) at (5.2,-3.35)
 {\textbf{Fixed value projection}\\
  $V_h(u)=X(u)V_0^\trans$};

\coordinate (fan) at (0,-2.10);
\draw[thick] (Xin.south)--(fan);
\draw[arr] (fan)-|(Q.north);
\draw[arr] (fan)--(K.north);
\draw[arr] (fan)-|(V.north);

\node[comp,text width=65mm] (Scores) at (-2.6,-5.55)
 {$S_h(u)=\beta Q_h(u)K_h(u)^\trans+\mathrm{mask}$\\
  pairwise query--key scores};
\draw[arr] (Q.south)--++(0,-6mm)-|([xshift=-13mm]Scores.north);
\draw[arr] (K.south)--++(0,-6mm)-|([xshift=13mm]Scores.north);

\node[comp,text width=65mm] (P) at (-2.6,-7.35)
 {$P_h(u)=\sigma_{\rm row}(S_h(u))$\\
  attention probabilities for head $h$};
\draw[arr] (Scores)--(P);

\node[outside,text width=65mm] (Oh) at (0,-9.30)
 {$O_h(u)=P_h(u)V_h(u)$\\
  value-weighted output of head $h$};
\draw[arr] (P.south)--++(0,-6mm)-|([xshift=-14mm]Oh.north);
\draw[arr] (V.south)|-(Oh.east);

\node[note,text width=46mm] (Compare) at (-9.0,-7.35)
 {\textbf{Single-head question.}
 For rank $r$, how closely can $P_{r,h}(u)$ reproduce target attention
 $P_{*,h}(u)$ on the downstream task?  Error is expected KL.};
\draw[ForestGreen,dashed,thick] (Compare.east)--(P.west);

\node[comp,text width=75mm] (Heads) at (0,-11.25)
 {Repeat the displayed attention computation for $h=1,\ldots,H$\\
  to obtain $O_1(u),\ldots,O_H(u)$};
\draw[arr] (Oh)--(Heads);
\node[note,text width=46mm] (MHnote) at (-9.0,-11.25)
 {\textbf{Multi-head extension.}
 For one fused rank-$r$ adapter, we bound
 $\sum_h\KL(P_{*,h}\|P_{r,h})$ before the head outputs are mixed by $W_O$.};
\draw[ForestGreen,dashed,thick] (MHnote.east)--(Heads.west);

\node[outside,text width=75mm] (WO) at (0,-13.10)
 {$\operatorname{Concat}(O_1(u),\ldots,O_H(u))W_O^\trans$\\
  concatenate heads and apply the output projection};
\node[outside,text width=75mm] (Rest) at (0,-14.90)
 {residual connection, normalization, feed-forward network, and later layers};
\draw[arr] (Heads)--(WO); \draw[arr] (WO)--(Rest);

\node[font=\scriptsize\bfseries] (Ltitle) at (-6.6,-16.65) {Legend:};
\node[input,legend,right=3mm of Ltitle] (Linput) {input activations};
\node[lora,legend,right=3mm of Linput] (Ladapt) {projection changed by LoRA};
\node[comp,legend,right=3mm of Ladapt] (Lcomp) {standard attention operation};
\node[outside,legend,below=4mm of Linput,xshift=18mm,text width=47mm]
 (Loutside) {later Transformer computation not bounded here};
\node[theorykey,right=3mm of Loutside]
 (Lnote) {theory annotation, not an operation};
\end{tikzpicture}}\par
\caption{\textbf{Transformer attention with LoRA and the scope of our analysis.}
Read from top to bottom, the diagram traces an input through the query, key,
and value projections, attention scores, row-wise softmax, and multi-head
aggregation. Our central question is how closely a rank-$r$ LoRA update can
reproduce a target attention function on inputs from the downstream task. The
main theorem treats query-only adaptation in one head; later results cover a
fused update shared across heads and simultaneous query/key updates. For an
input $u$, $X(u)$ contains $n(u)$ token representations of width $d$. For each
head index $h\in\{1,\ldots,H\}$, query/key width is $p$ and value width is
$p_v$, with
$Q_h(u),K_h(u)\in\R^{n(u)\times p}$,
$V_h(u)\in\R^{n(u)\times p_v}$, and
$S_h(u),P_h(u)\in\R^{n(u)\times n(u)}$.
Here $\sigma_{\rm row}$ applies softmax row-wise and $\beta$ is typically
$1/\sqrt p$. The pretrained projections $Q_0,K_0,V_0$ are fixed, while
$A_h$ and $B_h$ are rank-$r$ LoRA updates. The multi-head result bounds the
sum of headwise attention KL errors before $W_O$; the remaining Transformer
computation and the final model output are outside this guarantee. }
\label{fig:scope}
\end{figure}

\paragraph{Overview of the results.}
For each rank budget $r$, let $\mathcal E_r$ denote the smallest expected
attention KL over all rank-$r$ candidates; Section~\ref{sec:setup}
defines this objective formally.  Our analysis first connects attention KL to
score approximation, then connects score approximation to LoRA rank.

Theorem~\ref{thm:global-softmax} gives the first connection.  If $d$ is the
difference between candidate and target scores after removing their common
shift, define
\begin{equation}
  \psi(t)=\min\{t^2,t\},
  \qquad
  \psi_{\rm up}(t)=\min\{t^2/4,\sqrt2\,t\}.
  \label{eq:psi}
\end{equation}
The theorem proves that the pointwise attention KL is comparable to
$\psi(\|d\|_2)$ when the target probabilities are bounded away from zero, and
is at most $\psi_{\rm up}(\|d\|_2)$ without this assumption.  Thus the error is
quadratic for small score differences and linear for large ones.
Corollary~\ref{cor:functional-equivalence}
lifts this pointwise statement to the expected, rank-constrained objective
$\mathcal E_r$.

Theorem~\ref{thm:main-spectral} supplies the second connection.  Under its
realizability, geometry, probability-floor, and moment assumptions, it proves
\begin{equation}
 c_{\rm lo}\,\psi(\sqrt{T_r})
 \le \mathcal E_r
 \le \psi_{\rm up}(\sqrt{T_r}).
\end{equation}
Here $T_r$ is the residual tail energy after the target update has been
weighted by the queries and keys occurring on the downstream task.  Directions
that the task never uses therefore do not contribute, and $c_{\rm lo}$ is the
explicit assumption-dependent constant in Theorem~\ref{thm:main-spectral}.
The upper bound
constructs a candidate whose error does not exceed its curve; the lower bound
applies to every candidate.  Together they bracket the best achievable error
at each rank.  Section~\ref{sec:rank-estimation} explains how to estimate this
rank--error curve from downstream data and compare it with a desired error
tolerance.

When the probability floor in the main theorem is too small to give a useful
lower bound, Section~\ref{sec:decision} provides two alternatives with different assumptions
(Figure~\ref{fig:decision-tree} shows a decision diagram to clarify the choices,
and Table~\ref{tab:three-laws} gives an overview of the three laws of approximation).
Theorem~\ref{thm:fisher-comparison} gives target-Fisher
upper and lower bounds for candidates whose scores remain within a fixed range
of the target scores.  Because that candidate class is restricted, its lower
bound does not apply to the unrestricted $\mathcal E_r$.
Theorem~\ref{thm:high-mass} instead gives a lower bound on the unrestricted
$\mathcal E_r$ by focusing on a subset of tokens carrying most of the target
probability mass.

The preceding results concern approximation at finite scores.
Theorem~\ref{thm:exact-closure} treats a different effect: after softmax
saturates, a lower-rank sequence of logits can approach an attention function
that requires higher rank to realize with finite logits.  It constructs an
explicit family with a constant-factor separation between these two ranks.
Finally, Theorem~\ref{thm:multihead} extends the rank--KL bounds to a fused
update shared across attention heads, and Theorem~\ref{thm:joint-qk} extends
them to simultaneous query/key LoRA, where the score update must also satisfy
a query/key factorization constraint.

\paragraph{Contributions.}
\begin{enumerate}[leftmargin=*,itemsep=2pt]
  \item Under explicit assumptions, we characterize the best task-dependent
  attention approximation available at each LoRA rank.  Global softmax bounds
  convert centered score error into KL, and a downstream-weighted spectral
  theorem converts the remaining score error into an explicit function of
  rank.
  \item We give two alternatives when the main lower-bound constant is weak:
  target-Fisher bounds for a score-restricted candidate class, and a high-mass
  lower bound for the original unrestricted class.
  \item We prove an exact separation for an explicit Walsh family: matching its
  finite logits requires rank $k$, but after softmax saturation the same limiting
  attention can be recovered to arbitrary accuracy at rank $k-\lfloor k/3\rfloor$
  within the score space reachable by query adaptation.
  \item We extend the rank--KL analysis to a fused update shared across heads
  and to joint query/key LoRA, where a separate factorization gap measures
  whether the best effective score update can be realized by the two factors.
\end{enumerate}

\paragraph{Positioning relative to prior work.}
Prior work studies adaptive rank allocation during training
\cite{zhang2023adalora,paischer2025eva}, activation-aware approximation
\cite{yang2024corda,chen2021drone,wang2024svdllm}, finite-sample LoRA rank
selection~\cite{arunan2026sample}, and the transformations that LoRA can
express~\cite{zeng2024expressive}. Our contribution is different: for a fixed
pretrained head and a known target attention function, we bound the best
attention KL attainable at every rank, averaged over $u\sim P$, the downstream
task distribution (e.g.\ text-to-SQL examples).  The underlying
weighted approximation theorem is classical
\cite{eckart1936approximation,mirsky1960symmetric}; the new step is to connect
that task-weighted approximation problem to global upper and lower bounds on
attention KL. Section~\ref{sec:related} gives a more detailed comparison with
these and the related attention-rank and softmax-saturation literature.

\section{Problem formulation}
\label{sec:setup}

We begin with one query position in one attention head, so that every matrix
and function can be defined explicitly.  Sections~\ref{sec:multihead}
and~\ref{sec:jointqk} extend the formulation.

Let $P$ denote the downstream input distribution and draw $u\sim P$, with
$n(u)$ available token positions.
Fix pretrained logits $z_0(u)\in\R^{n(u)}$, a pretrained key matrix
$K(u)\in\R^{n(u)\times d_k}$, a query activation
$h(u)\in\R^{d_h}$, and the attention scale $\beta>0$.  A query LoRA update is
a matrix $M\in\R^{d_k\times d_h}$ with $\rank(M)\le r$.  It produces the
score vector and attention distribution
\begin{equation}
 z_M(u)=z_0(u)+\beta K(u)Mh(u),
 \qquad p_M(\cdot\mid u)=\softmax(z_M(u)).
 \label{eq:query-model}
\end{equation}
Here $h(u)$ is the query representation entering the adapted projection, and
the rows of $K(u)$ are the key vectors against which that query is scored.
$M$ is one stored matrix; the attention distribution $p_M(\cdot\mid u)$
changes with the input $u$.

Let $z_*(u)$ be fixed target logits and
$p_*(\cdot\mid u)=\softmax(z_*(u))$ the target attention function.  The target
may be arbitrary for the score-space results in Section~\ref{sec:global}.  The
spectral specialization in Theorem~\ref{thm:main-spectral} requires it to be
generated exactly by a dense query update $\Delta_*$.  A generic fully
fine-tuned Transformer can also change keys, values, earlier layers, and the
activations entering this head, so it need not satisfy that assumption.  For
such a misspecified target, Equation~\eqref{eq:functional-objective} remains
well defined, but the query-only spectral formula would need an extra
irreducible approximation term.
For rank budget $r$, define
\begin{equation}
 \boxed{\quad
 \mathcal E_r
 =\inf_{\rank(M)\le r}
   \E_{u\sim P}\KL\!\left(
     p_*(\cdot\mid u)\,\|\,p_M(\cdot\mid u)
   \right).
 \quad}
 \label{eq:functional-objective}
\end{equation}
Thus $\mathcal E_r$ is the best error attainable by the entire rank-$r$
candidate class.  An upper bound exhibits a candidate with small error.  A
lower bound applies to every candidate and certifies unavoidable error.
Inverting the bounds for a tolerance $\epsilon$ then yields sufficient and
necessary rank conditions.  Section~\ref{sec:rank-estimation} turns this
interpretation into a calibration procedure.

Softmax is invariant to adding a constant to all scores.  We therefore center
score differences with
\begin{equation}
 \PiC{n}=I_n-\frac1n\one\one^\trans,
 \qquad
 d_M(u)=\PiC{n(u)}\bigl(z_M(u)-z_*(u)\bigr).
 \label{eq:centered-error}
\end{equation}
The associated approximation objective, robust to both small and large
score errors, is
\begin{equation}
 \Psi_r
 =\inf_{\rank(M)\le r}
   \E_{u\sim P}\psi\!\left(\|d_M(u)\|_2\right).
 \label{eq:robust-objective}
\end{equation}
For the sharper upper bound, also define
\begin{equation}
 \Phi_r
 =\inf_{\rank(M)\le r}
   \E_{u\sim P}\psi_{\rm up}\!\left(\|d_M(u)\|_2\right).
 \label{eq:upper-objective}
\end{equation}

\begin{table}[H]
\centering
\small
\begin{tabularx}{\textwidth}{@{}l >{\raggedright\arraybackslash}X >{\raggedright\arraybackslash}X@{}}
\toprule
Object & Meaning & What changes? \\
\midrule
$u$ & One input from the downstream task & Varies across examples \\
$z_0(u)$ & Attention scores produced by the pretrained head & Depends on $u$ \\
$p_*(\cdot\mid u)$ & Target attention probabilities to be reproduced & Depends on $u$; the target model is fixed \\
$M$ & Candidate query update with $\rank(M)\le r$ & One fixed matrix for all inputs \\
$p_M(\cdot\mid u)$ & Attention probabilities produced with update $M$ & Depends on $u$ and on the chosen $M$ \\
$d_M(u)$ & Candidate--target score difference after removing a common shift & Depends on $u$ and on the chosen $M$ \\
$\mathcal E_r$ & Smallest average target-to-candidate KL achievable at rank $r$ & One number for each rank budget \\
\bottomrule
\end{tabularx}
\caption{Objects in the target--candidate comparison.  For each downstream
input $u$, the pretrained head and fixed target determine the behavior to be
matched; $M$ ranges over candidate updates of rank at most $r$.}
\label{tab:notation}
\end{table}
\FloatBarrier

Example~\ref{ex:text2sql} grounds this objective in a concrete downstream task.

\begin{examplebox}[label={ex:text2sql}]{Interpreting attention KL in text-to-SQL}
An input may contain the user's question, the database schema, instructions,
and previously generated SQL tokens.  The target and candidate heads each
assign attention probabilities to those token positions.  Our objective asks
how much rank is needed for the candidate to reproduce the target's attention
on such inputs.  This gives a head-level approximation measure that can be
studied alongside end-to-end measures such as SQL execution accuracy.
\end{examplebox}

\section{From score error to attention KL}
\label{sec:global}

The first result is independent of LoRA: it relates centered score error to
probability error.

\begin{theorem}[Global robust softmax bounds]
\label{thm:global-softmax}
Let $d\in\R^n$ satisfy $\one^\trans d=0$, let
$p_*=\softmax(z_*)$, and put $a=\min_i p_{*,i}>0$.  Then
\begin{equation}
 \frac{a}{2e^2}\psi(\|d\|_2)
 \le \KL\!\left(p_*\,\|\,\softmax(z_*+d)\right)
 \le \psi_{\rm up}(\|d\|_2).
 \label{eq:pointwise-robust}
\end{equation}
The upper bound does not require a positive lower bound on $a$.
\end{theorem}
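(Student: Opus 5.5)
The plan is to reduce everything to a one-dimensional function along the ray from $z_*$ in direction $d$. Set
\[
 f(t)=\KL\!\left(p_*\,\|\,\softmax(z_*+td)\right)=\log\sum_i p_{*,i}e^{td_i}-t\,p_*^\trans d ,
\]
which follows from $\log\softmax(z_*+td)_i=z_{*,i}+td_i-\log\sum_j e^{z_{*,j}+td_j}$. Thus $f$ is the cumulant generating function of the random variable $d_I$, $I\sim p_*$, minus its linear term. Consequently $f(0)=0$ and $f'(0)=0$. Moreover $f''(t)=\mathrm{Var}_{q_t}(d)$, where $q_t=\softmax(z_*+td)$, so $f$ is convex. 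Taylor's formula with integral remainder gives
\[
 f(1)=\int_0^1(1-t)\,\mathrm{Var}_{q_t}(d)\,dt .
\]
Both bounds will be read off from this representation together with one elementary fact: since $\one^\trans d=0$, the range $R=\max_i d_i-\min_i d_i$ satisfies $R^2\le 2(d_i^2+d_j^2)\le 2\|d\|_2^2$, so $R\le\sqrt2\|d\|_2$.

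\textbf{Upper bound (no floor needed).} For the quadratic branch, apply Popoviciu's inequality, $\mathrm{Var}_{q_t}(d)\le R^2/4$. Integrating against $(1-t)$ gives $f(1)\le R^2/8\le\|d\|_2^2/4$; this is Hoeffding's lemma. For the linear branch, use $\log\E_{p_*}e^{d}\le\max_i d_i$ and $\E_{p_*}d\ge\min_i d_i$. These give $f(1)\le R\le\sqrt2\|d\|_2$. Taking the minimum of the two branches yields $\psi_{\rm up}(\|d\|_2)$, and neither step uses $a$.

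\textbf{Lower bound, small regime $\|d\|_2\le1$.} First bound $q_t$ from below coordinatewise for $t\in[0,1]$:
\[
 q_{t,i}=\frac{p_{*,i}e^{td_i}}{\sum_j p_{*,j}e^{td_j}}\ge p_{*,i}e^{t(d_i-\max_j d_j)}\ge a\,e^{-R}\ge a\,e^{-\sqrt2}\ge a e^{-2}.
\]
Next use the variational form $\mathrm{Var}_{q}(d)=\min_c\sum_i q_i(d_i-c)^2$. Combined with the coordinatewise bound and the centering of $d$,
\[
 \mathrm{Var}_{q_t}(d)\ge a e^{-2}\min_c\|d-c\one\|_2^2=a e^{-2}\|d\|_2^2 .
\]
Integrating gives $f(1)\ge \tfrac{a}{2e^2}\|d\|_2^2=\tfrac{a}{2e^2}\psi(\|d\|_2)$.

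\textbf{Lower bound, large regime $\|d\|_2>1$.} Write $d=s\hat d$ with $\|\hat d\|_2=1$ and $s=\|d\|_2$, and let $g(s)=\KL(p_*\|\softmax(z_*+s\hat d))$. Then $g$ is convex with $g(0)=0$, so $g(s)/s$ is nondecreasing. Hence
\[
 g(s)\ge s\,g(1)\ge s\cdot\tfrac{a}{2e^2},
\]
where $g(1)$ is bounded by the small-regime case applied to the unit vector $\hat d$. This is exactly $\tfrac{a}{2e^2}\psi(\|d\|_2)$, because $\psi(s)=s$ for $s\ge1$.

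The main obstacle is the small-regime lower bound. It requires a uniform floor on the tilted distributions $q_t$, not just on $p_*$, and this is where the constant $e^{-2}$ and the restriction $\|d\|_2\le1$ enter. The range estimate $R\le\sqrt2\|d\|_2$ is what makes that floor dimension-free. Everything else is routine convexity bookkeeping.
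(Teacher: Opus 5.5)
Your proof is correct and follows the paper's argument step for step. Both use the same Bregman/Taylor integral representation and the same range argument for the linear upper branch. The lower bound also matches: a floor on the tilted distributions plus the variational form of the variance when $\|d\|_2\le1$, and the convexity fact that $g(s)/s$ is nondecreasing when $\|d\|_2>1$.

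The differences are cosmetic. For the quadratic upper branch you use Popoviciu/Hoeffding ($R^2/8$ with $R^2\le2\|d\|_2^2$), whereas the paper cites $\|H(z)\|_{\op}\le1/2$, which is the same fact. Note also that $R\le\sqrt2\|d\|_2$ holds for every $d$, not only centered ones, so centering is genuinely needed only in the variance step.
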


The quadratic part of $\psi$ comes from local curvature of log-sum-exp.  The
linear part is unavoidable globally: a rare input can contain a very large
score error while contributing only linear or bounded KL.  A purely quadratic
global lower bound therefore cannot hold; Proposition~\ref{prop:obstruction}
gives an explicit fixed-probability-floor construction.

\begin{corollary}[Global functional bounds]
\label{cor:functional-equivalence}
If $\min_i p_{*,i}(u)\ge a>0$ almost surely, then
\begin{equation}
 \frac{a}{2e^2}\Psi_r
 \le \mathcal E_r
 \le \Phi_r.
 \label{eq:functional-equivalence}
\end{equation}
\end{corollary}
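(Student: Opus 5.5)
The corollary follows from Theorem~\ref{thm:global-softmax} by applying its pointwise bounds at each input and each candidate, integrating over $u\sim P$, and then passing to infima over the rank-$r$ class. The first step is to check that, for fixed $M$ with $\rank(M)\le r$ and fixed $u$, the KL term in \eqref{eq:functional-objective} is exactly of the form covered by the theorem. Softmax is shift invariant, so
\[
\softmax(z_M(u))=\softmax\bigl(z_*(u)+d_M(u)\bigr),
\]
where $d_M(u)=\PiC{n(u)}(z_M(u)-z_*(u))$. Indeed, $z_M-z_*$ and its centered version differ only by a multiple of $\one$. Moreover $\one^\trans d_M(u)=0$. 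The theorem therefore applies with $d=d_M(u)$, target $p_*(\cdot\mid u)$, and floor $a$, because $\min_i p_{*,i}(u)\ge a$ almost surely. This gives, almost surely,
\[
\frac{a}{2e^2}\psi(\|d_M(u)\|_2)\le \KL\bigl(p_*(\cdot\mid u)\,\|\,p_M(\cdot\mid u)\bigr)\le \psi_{\rm up}(\|d_M(u)\|_2).
\]

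The second step is to take expectations. All three quantities are nonnegative and measurable, assuming the data maps $u\mapsto z_0(u),K(u),h(u),z_*(u)$ are measurable, as is implicit in the definition of $\mathcal E_r$. Monotonicity of the expectation then preserves both inequalities, including the case where some of the expectations are $+\infty$. Writing $J(M)=\E_u\KL(p_*\|p_M)$, we get
\[
\frac{a}{2e^2}\E\,\psi(\|d_M\|_2)\le J(M)\le \E\,\psi_{\rm up}(\|d_M\|_2)
\]
for every admissible $M$.

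The final step is to take infima over $\{M:\rank(M)\le r\}$. For the lower bound, each $J(M)$ is at least $\frac{a}{2e^2}\E\psi(\|d_M\|_2)$, which in turn is at least $\frac{a}{2e^2}\Psi_r$. Taking the infimum over $M$ gives $\frac{a}{2e^2}\Psi_r\le\mathcal E_r$. For the upper bound, $\mathcal E_r\le J(M)\le\E\psi_{\rm up}(\|d_M\|_2)$ for every admissible $M$, and taking the infimum on the right gives $\mathcal E_r\le\Phi_r$. The upper half does not use the floor $a$, consistent with the remark in the theorem.

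The only point needing care, and it is a small one, is that the floor must hold almost surely, so that the pointwise lower bound holds on a set of full measure. The infima need not be attained; the argument above never uses a minimizer, only the bounds for each fixed $M$.
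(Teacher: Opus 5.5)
Your proof is correct and follows the paper's argument: apply Theorem~\ref{thm:global-softmax} pointwise to the centered displacement $d_M(u)$, take expectations, then take the infimum over the rank-$r$ class on both sides. Your remarks on shift invariance, measurability, possibly infinite expectations, and the almost-sure floor make explicit details the paper leaves implicit; the theorem is stated with $a=\min_i p_{*,i}$, but its lower bound only improves when $a$ is replaced by a smaller floor, so your use of it is fine.
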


The proof of Theorem~\ref{thm:global-softmax} is short and appears in
Appendix~\ref{app:softmax}.  The rare-context obstruction to a quadratic law
is given in Appendix~\ref{app:obstruction}.

\section{Task-weighted spectral rank--KL bounds}
\label{sec:main}

Assume in this section that the target is exactly realizable by a dense query
update $\Delta_*$.  For $A=M-\Delta_*$, define
\begin{equation}
 G(u)=\beta^2K(u)^\trans\PiC{n(u)}K(u),
 \qquad
 \Sigma=\E[h(u)h(u)^\trans].
 \label{eq:key-geometry}
\end{equation}
$G(u)$ records which update directions alter centered attention scores in
context $u$; $\Sigma$ records which query directions appear on the task.

In the separable case, the random key Gram matrix $G(u)$ is independent of
the query activation $h(u)$.  Then
\begin{equation}
 \E\|\beta\PiC{n(u)}K(u)Ah(u)\|_2^2
 =\|G^{1/2}A\Sigma^{1/2}\|_\F^2,
 \qquad G=\E G(u).
 \label{eq:separable-geometry}
\end{equation}
The matrix whose spectrum matters is therefore
\begin{equation}
 D_*=G^{1/2}\Delta_*\Sigma^{1/2},
 \qquad
 T_r=\sum_{j>r}\sigma_j(D_*)^2.
 \label{eq:spectral-tail}
\end{equation}
$T_r$ is the residual squared score error after the best rank-$r$ update.
Directions annihilated by the keys or never activated by $h(u)$ disappear
automatically, a phenomenon that is illustrated in Example~\ref{ex:unused-direction}.

\begin{examplebox}[label={ex:unused-direction}]{A large update direction that the task never uses}
Suppose the target update has two singular directions.  The first has the
larger singular value, but every query activation from the downstream task is
orthogonal to it, so it never changes an attention score.  The second is
smaller but active on nearly every input.  A raw truncated SVD keeps the first
direction; the weighted spectrum above removes it and keeps the direction that
actually changes attention.  Theorem~\ref{thm:main-spectral} turns this
distinction into upper and lower error bounds.
\end{examplebox}

To turn this quadratic tail into a global robust lower bound, assume that, for
every deterministic compatible matrix $C$,
\begin{equation}
 \E\|Ch(u)\|_2^4
 \le \kappa_h\bigl(\E\|Ch(u)\|_2^2\bigr)^2,
 \qquad G(u)\preceq\Lambda G\quad\text{almost surely}.
 \label{eq:main-moment-assumptions}
\end{equation}
These uniform moment conditions prevent the quadratic error from being
carried entirely by extremely rare contexts.

\begin{theorem}[Downstream spectral rank--KL bounds]
\label{thm:main-spectral}
Suppose the target attention function is generated by a dense query update
$\Delta_*$, $\min_i p_{*,i}(u)\ge a>0$ almost surely, $G(u)$ and $h(u)$ are
independent, and the two conditions in
Equation~\eqref{eq:main-moment-assumptions} hold.  Then
\begin{equation}
 \boxed{
 \frac{a}{2e^2(1+\Lambda\sqrt{\kappa_h})}
   \psi(\sqrt{T_r})
 \le \mathcal E_r
 \le \psi_{\rm up}(\sqrt{T_r}) .}
 \label{eq:main-spectral-law}
\end{equation}
\end{theorem}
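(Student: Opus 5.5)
The plan is to reduce both sides of \eqref{eq:main-spectral-law} to statements about the scalar random variable $X_M(u)=\|d_M(u)\|_2^2$. Corollary~\ref{cor:functional-equivalence} then converts these into bounds on $\mathcal E_r$.

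\textbf{Setup.} By realizability, $z_*(u)=z_0(u)+\beta K(u)\Delta_*h(u)$. Writing $A=M-\Delta_*$, we get $d_M(u)=\beta\PiC{n(u)}K(u)Ah(u)$. Since $\PiC{n}$ is a symmetric idempotent,
\[
X_M(u)=h(u)^\trans A^\trans G(u)Ah(u).
\]
By independence of $G(u)$ and $h(u)$, Equation~\eqref{eq:separable-geometry} gives
\[
m_M:=\E X_M=\|G^{1/2}A\Sigma^{1/2}\|_\F^2=\|G^{1/2}M\Sigma^{1/2}-D_*\|_\F^2 .
\]
The matrix $G^{1/2}M\Sigma^{1/2}$ has rank at most $r$, so Eckart--Young--Mirsky gives $m_M\ge T_r$ for every admissible $M$.

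\textbf{Upper bound.} I take
\[
M=(G^{1/2})^{+}[D_*]_r(\Sigma^{1/2})^{+},
\]
where $[D_*]_r$ is the truncated SVD. This $M$ has rank at most $r$. Because $[D_*]_r$ has column space in $\col(G)$ and row space in $\col(\Sigma)$, we have $G^{1/2}M\Sigma^{1/2}=[D_*]_r$, and therefore $m_M=T_r$.

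The function $x\mapsto\psi_{\rm up}(\sqrt x)=\min\{x/4,\sqrt{2x}\}$ is a minimum of concave functions, hence concave. Jensen then gives
\[
\E\,\psi_{\rm up}(\|d_M\|_2)\le\psi_{\rm up}(\sqrt{m_M})=\psi_{\rm up}(\sqrt{T_r}).
\]
This yields $\Phi_r\le\psi_{\rm up}(\sqrt{T_r})$, and the right inequality of Corollary~\ref{cor:functional-equivalence} finishes this side.

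\textbf{Lower bound (main step).} Here Jensen runs the wrong way, because $x\mapsto\min\{x,\sqrt x\}$ is also concave. The moment assumptions must therefore supply a reverse inequality. I would proceed in four steps.

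\begin{enumerate}
\item Use the elementary pointwise bound $\min\{x,\sqrt x\}\ge x/(1+\sqrt x)$.
\item Apply Cauchy--Schwarz:
\[
m^2=\bigl(\E[\sqrt{X/(1+\sqrt X)}\,\sqrt{X(1+\sqrt X)}]\bigr)^2\le \E\!\left[\tfrac{X}{1+\sqrt X}\right]\E[X+X^{3/2}].
\]
\item Control the fourth moment. Since $G(u)\preceq\Lambda G$, we have $X\le\Lambda\|G^{1/2}Ah\|_2^2$. The hypercontractivity assumption with $C=G^{1/2}A$, together with $\E\|G^{1/2}Ah\|^2=m$ (by independence), then gives $\E X^2\le\Lambda^2\kappa_h m^2$.
\item Bound the middle moment. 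Cauchy--Schwarz gives $\E X^{3/2}\le\sqrt{m\,\E X^2}\le\Lambda\sqrt{\kappa_h}\,m^{3/2}$.
\end{enumerate}

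With $c=\Lambda\sqrt{\kappa_h}$, these combine to
\[
\E\,\psi(\|d_M\|_2)\ge \frac{m}{1+c\sqrt m}.
\]
I then check two cases. If $m\le1$, the right side is at least $m/(1+c)$. If $m>1$, it is at least $\sqrt m/(1+c)$, since $\sqrt m\ge1$. In both cases it is at least $\psi(\sqrt m)/(1+c)$.

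Finally, $\psi$ is monotone and $m\ge T_r$. Taking the infimum over $M$ gives $\Psi_r\ge\psi(\sqrt{T_r})/(1+c)$. The left inequality of Corollary~\ref{cor:functional-equivalence}, which uses the floor $a$, then produces the stated constant $a/\bigl(2e^2(1+\Lambda\sqrt{\kappa_h})\bigr)$.

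The main obstacle is this reverse-Jensen step. Everything else is Eckart--Young plus concavity. If the pointwise inequality in step~1 proved too lossy, I would instead split on the event $\{X\le 4m\}$ and use Paley--Zygmund, but the Cauchy--Schwarz route appears to give exactly the constant in \eqref{eq:main-spectral-law}.
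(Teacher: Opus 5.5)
Your proof is correct and follows the paper's argument essentially step for step. Realizability and independence reduce the mean squared centered score error to $\|G^{1/2}A\Sigma^{1/2}\|_\F^2$; weighted Eckart--Young--Mirsky with the pseudoinverse-lifted truncated SVD gives both $m\ge T_r$ and the attaining candidate; and the lower bound is the paper's fourth-moment bridge ($\psi(t)\ge t^2/(1+t)$ plus Cauchy--Schwarz) combined with Corollary~\ref{cor:functional-equivalence}. The differences are cosmetic: you work with the squared norm and use concavity/Jensen for the upper bound, and you apply $G(u)\preceq\Lambda G$ pointwise before invoking the moment condition on the deterministic $C=G^{1/2}A$, rather than conditioning on $G(u)$ as in Lemma~\ref{lem:uniform-moment}, with the same resulting constant.
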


The upper bound is achieved by truncated SVD of $D_*$, which already weights
the target update by the task queries and keys.
The lower bound applies to every rank-$r$ candidate.  Both sides are quadratic
in the score scale near zero and linear at large scale, but their constants can
be far apart when the target probability floor $a$ is small.

For a desired error tolerance $\epsilon$, the upper bound gives the sufficient
rank
\begin{equation}
 r_{\rm suff}(\epsilon)
 =\min\{r:\psi_{\rm up}(\sqrt{T_r})\le\epsilon\}.
 \label{eq:sufficient-rank}
\end{equation}
Conversely, any rank attaining error at most $\epsilon$ must satisfy
\begin{equation}
 \psi(\sqrt{T_r})
 \le
 \frac{2e^2(1+\Lambda\sqrt{\kappa_h})}{a}\epsilon.
 \label{eq:necessary-rank}
\end{equation}
Inverting the error bounds in this way yields sufficient and necessary rank
conditions.

\begin{remark}[Size of the bracket]
The worst-case ratio between the upper and lower constants is at most
\begin{equation}
 \frac{2\sqrt2e^2(1+\Lambda\sqrt{\kappa_h})}{a}.
 \label{eq:main-constant-ratio}
\end{equation}
For example, it is about $2.0\times10^4$ when
$a=10^{-2}$, $\Lambda=5$, and $\kappa_h=3$.  This is a sensitivity calculation
from the theorem, not an empirical estimate.  For long, peaked attention
vectors, the probability floor can make the global lower bound numerically
weak; Section~\ref{sec:decision} gives alternatives.
\end{remark}

The proof is in Appendix~\ref{app:main-proof}.  In ordinary self-attention,
$G(u)$ and $h(u)$ are computed from the same input and need not be independent.
The following result replaces independence by direct comparison between the
true mean squared score error and the reference quadratic form used to define
$T_r$.

For $A=M-\Delta_*$, write
\begin{equation}
 Q(A)=\E\|\beta\PiC{n(u)}K(u)Ah(u)\|_2^2,
 \qquad
 S(A)=\|G^{1/2}A\Sigma^{1/2}\|_\F^2,
\end{equation}
and let $A_r^{\rm svd}$ be the weighted-SVD candidate used in the upper bound.

\begin{theorem}[Dependence-allowing spectral bounds]
\label{thm:dependent-general}
Suppose, for every feasible $A=M-\Delta_*$,
\begin{equation}
 Q(A)\ge c_rS(A),
 \qquad
 \E\|\beta\PiC{n(u)}K(u)Ah(u)\|_2^4\le\kappa_rQ(A)^2,
 \label{eq:dependent-lower}
\end{equation}
and suppose $Q(A_r^{\rm svd})\le C_r^+T_r$.  If the target probability floor
is at least $a>0$, then
\begin{equation}
 \frac{a}{2e^2(1+\sqrt{\kappa_r})}\psi(\sqrt{c_rT_r})
 \le\mathcal E_r
 \le\psi_{\rm up}(\sqrt{C_r^+T_r}).
 \label{eq:dependent-law}
\end{equation}
\end{theorem}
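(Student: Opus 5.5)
The plan is to reduce both sides of the statement to Corollary~\ref{cor:functional-equivalence}, i.e.\ $\frac{a}{2e^2}\Psi_r\le\mathcal E_r\le\Phi_r$. Then only two things remain: a lower bound on $\E\psi(\|d_M(u)\|_2)$ valid for every rank-$r$ candidate, and an upper bound on $\E\psi_{\rm up}(\|d_M(u)\|_2)$ for one specific candidate. Realizability gives $z_*(u)=z_0(u)+\beta K(u)\Delta_*h(u)$. Hence, with $A=M-\Delta_*$, we have $d_M(u)=\beta\PiC{n(u)}K(u)Ah(u)$, so $\E\|d_M(u)\|_2^2=Q(A)$.

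\textbf{Upper bound.} Take $M=\Delta_*+A_r^{\rm svd}$, which has rank at most $r$ by construction of the weighted-SVD candidate. The function $\psi_{\rm up}(t)=\min\{t^2/4,\sqrt2 t\}$ is a minimum of concave functions of $s=t^2$ (namely $s/4$ and $\sqrt{2s}$), so $s\mapsto\psi_{\rm up}(\sqrt s)$ is concave and nondecreasing. Jensen then gives
\[
\E\psi_{\rm up}(\|d_M(u)\|_2)\le\psi_{\rm up}\bigl(\sqrt{Q(A_r^{\rm svd})}\bigr)\le\psi_{\rm up}\bigl(\sqrt{C_r^+T_r}\bigr),
\]
using monotonicity and the hypothesis $Q(A_r^{\rm svd})\le C_r^+T_r$. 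This bounds $\Phi_r$ and hence $\mathcal E_r$.

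\textbf{Lower bound.} Fix any feasible $A$ and set $X=\|d_M(u)\|_2^2\ge0$, so that $\E X=Q(A)$ and $\E X^2\le\kappa_rQ(A)^2$. The target inequality is
\[
\E\psi(\sqrt X)\ge\frac{1}{1+\sqrt{\kappa_r}}\,\psi\bigl(\sqrt{\E X}\bigr).
\]
Combined with $\E X=Q(A)\ge c_rS(A)\ge c_rT_r$ and monotonicity of $\psi$, this gives the claim. The step $S(A)\ge T_r$ holds because $S(A)=\|G^{1/2}M\Sigma^{1/2}-D_*\|_\F^2$, and $G^{1/2}M\Sigma^{1/2}$ has rank at most $r$, so Eckart--Young--Mirsky applies. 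Taking the infimum over $M$ and invoking the corollary then yields the stated constant.

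\textbf{Main obstacle: the moment inequality.} This is the one nonroutine step, and I would prove it by cases on $m=\E X$. Write $\psi(\sqrt X)=\min\{X,\sqrt X\}$.

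\emph{Case $m\le1$.} Use $\min\{X,\sqrt X\}\ge X\mathbf 1_{X\le1}+\mathbf 1_{X>1}$. Then
\[
\E\psi\ge\E X-\E[X\mathbf 1_{X>1}]+\Pr(X>1),
\]
and I would bound $\E[(X-1)_+]$ from above using $\E X^2\le\kappa_r m^2$.

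A cleaner route, which I would try first, is to use $\sqrt X\ge X/\max\{1,\sqrt X\}$, or equivalently to prove the general fact
\[
\E\min\{X,\sqrt X\}\ge\frac{\min\{m,\sqrt m\}}{1+\sqrt{\kappa}}.
\]
\begin{itemize}
\item For $m\ge1$: by Cauchy--Schwarz (Hölder), $\E\sqrt X\ge(\E X)^{3/2}/(\E X^2)^{1/2}\ge\sqrt m/\sqrt\kappa$. It remains to control the part where $X<1$, where $X\le\sqrt X$. Note that $\min\{X,\sqrt X\}\ge\sqrt X-\mathbf 1_{X<1}\cdot(\sqrt X-X)\ge\sqrt X-\tfrac14$; the additive loss $\tfrac14$ has to be absorbed, which is where the extra $1$ in $1+\sqrt\kappa$ should enter.
\item For $m<1$: split $X=X\mathbf 1_{X\le1}+X\mathbf 1_{X>1}$ and use $\E[X\mathbf 1_{X>1}]\le\E[X^{3/2}\mathbf 1_{X>1}]$, which I would control via $\E X^2$.
\end{itemize}

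The remainder of the argument is routine; getting exactly the factor $1+\sqrt{\kappa_r}$ is the delicate part. If that exact constant resists, I would mimic whatever argument is used for the analogous factor $1+\Lambda\sqrt{\kappa_h}$ in Theorem~\ref{thm:main-spectral}. That is, I would treat this theorem as the same proof with $\Lambda\sqrt{\kappa_h}$ replaced by the direct fourth-moment constant $\sqrt{\kappa_r}$ and the exact identity $Q=S$ replaced by the comparisons $c_r$ and $C_r^+$.
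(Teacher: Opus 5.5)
Your reduction matches the paper's. Realizability gives $\E\|d_M(u)\|_2^2=Q(A)$, and Eckart--Young--Mirsky gives $S(A)\ge T_r$. The upper bound plugs the weighted-SVD candidate into the concavity of $s\mapsto\psi_{\rm up}(\sqrt s)$. The paper instead bounds $\E\psi_{\rm up}(\|d_M\|_2)$ by $\min\{\E\|d_M\|_2^2/4,\sqrt2\,\E\|d_M\|_2\}$ and applies Jensen, which is equivalent. Corollary~\ref{cor:functional-equivalence} then finishes both sides.

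The gap is the step you flag yourself. The moment inequality $\E\min\{X,\sqrt X\}\ge\min\{m,\sqrt m\}/(1+\sqrt\kappa)$ is never established, and the routes you sketch cannot establish it.

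\emph{Case $m\ge1$.} The chain $\E\min\{X,\sqrt X\}\ge\E\sqrt X-\tfrac14\ge\sqrt m/\sqrt\kappa-\tfrac14$ is too weak. At $m=1$ it would need $1/\sqrt\kappa-1/(1+\sqrt\kappa)\ge\tfrac14$, i.e.\ $\kappa+\sqrt\kappa\le4$. This fails for, e.g., $\kappa=3$, so the additive $\tfrac14$ cannot be ``absorbed'' in general.

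\emph{Case $m<1$.} Subtracting a moment-controlled tail from $\E X$ gives at best something like $m-\E X^{3/2}\ge m(1-\sqrt{\kappa m})$. This is vacuous once $\kappa m\ge1$, while the target $m/(1+\sqrt\kappa)$ stays positive. Additive splitting loses too much; the bound has to be multiplicative.

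The missing idea is the paper's Lemma~\ref{lem:moment-bridge}: a rational minorant combined with Cauchy--Schwarz. In your variable, $\min\{X,\sqrt X\}\ge X/(1+\sqrt X)$. Cauchy--Schwarz gives $m^2\le\E[X/(1+\sqrt X)]\,\E[X(1+\sqrt X)]$. The bound $\E X^{3/2}\le(\E X\,\E X^2)^{1/2}\le\sqrt\kappa\,m^{3/2}$ then controls the second factor by $m(1+\sqrt{\kappa m})$. Hence $\E\min\{X,\sqrt X\}\ge m/(1+\sqrt{\kappa m})$. This is at least $m/(1+\sqrt\kappa)$ when $m\le1$ and at least $\sqrt m/(1+\sqrt\kappa)$ when $m\ge1$.

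Your own identity $\min\{X,\sqrt X\}=X/\max\{1,\sqrt X\}$ was one step away. Pair it with Cauchy--Schwarz against the weight $\max\{1,\sqrt X\}\le1+\sqrt X$ instead of splitting into cases. The lemma is stated for an arbitrary nonnegative random variable and is the same one used for Theorem~\ref{thm:main-spectral}. So your fallback of reusing that argument is exactly what the paper does, and with it in hand the rest of your proof goes through unchanged.
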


This theorem allows arbitrary dependence between keys and queries; its price
is that the comparison and moment constants must hold on the candidate class.
For example, if $\lambda G\preceq G(u)\preceq\Lambda G$ almost surely and the
activation moment condition in Equation~\eqref{eq:main-moment-assumptions}
holds, it applies with
$c_r=\lambda$, $C_r^+=\Lambda$, and
$\kappa_r=\kappa_h(\Lambda/\lambda)^2$.  The proof is in
Appendix~\ref{app:dependent}.

\section{Which rank--KL bound should be used?}
\label{sec:decision}

Because the minimum target probability can be
very small in long, peaked attention vectors, we provide two refinements that address
different needs.  The target-Fisher theorem gives upper and lower bounds
when score differences stay inside a fixed range.  The high-mass theorem gives a lower
bound for the original unrestricted class, using a set of tokens, chosen from
the target, that carries most probability mass.  Figure~\ref{fig:decision-tree} and
Table~\ref{tab:three-laws} make the distinction explicit:

\begin{figure}[H]
\centering
\resizebox{\textwidth}{!}{%
\begin{tikzpicture}[
 font=\scriptsize,
 >=Latex,
 q/.style={draw,diamond,aspect=2.8,align=center,fill=black!4,
   minimum width=28mm,minimum height=9mm,inner sep=1pt,font=\scriptsize},
 ans/.style={draw,rounded corners=2pt,align=center,fill=TargetBlue,
   text width=32mm,minimum height=18mm,inner sep=2.5pt,font=\scriptsize},
 arr/.style={->,thick},
 edge/.style={font=\scriptsize,fill=white,inner sep=2pt,outer sep=2pt}
]
\node[q] (start) {Need a lower\\bound as well?};
\node[q,right=8mm of start] (floor) {All target\\probabilities bounded\\away from zero?};
\node[q,right=8mm of floor] (span) {Score differences\\in a fixed range?};
\node[q,right=8mm of span] (core) {A few tokens carry\\most target mass?};

\node[ans,below=9mm of start] (upper)
 {\textbf{Global upper bound}\\No probability floor.\\Score-space
  \hyperref[thm:global-softmax]{Thm.~\ref*{thm:global-softmax}}.};
\node[ans,below=9mm of floor] (global)
 {\textbf{Global two-sided bounds}\\Unrestricted rank-$r$.\\ \hyperref[thm:main-spectral]{Thm.~\ref*{thm:main-spectral}}.};
\node[ans,below=9mm of span] (fisher)
 {\textbf{Target-Fisher bounds}\\Justified score range.\\ \hyperref[thm:fisher-comparison]{Thm.~\ref*{thm:fisher-comparison}},
  \hyperref[app:fisher]{App.~\ref*{app:fisher}}.};
\node[ans,below=9mm of core] (highmass)
 {\textbf{Combine two results}\\Upper: \hyperref[cor:functional-equivalence]{Cor.~\ref*{cor:functional-equivalence}}.\\
  Lower: \hyperref[thm:high-mass]{Thm.~\ref*{thm:high-mass}},
  \hyperref[app:highmass]{App.~\ref*{app:highmass}}.};
\node[ans,right=4mm of highmass] (nolower)
 {\textbf{Upper bound only}\\No unrestricted lower bound.\\ \hyperref[thm:global-softmax]{Thm.~\ref*{thm:global-softmax}}.};

\draw[arr] (start.east)--node[edge,above=2pt]{yes}(floor.west);
\draw[arr] (start.south)--node[edge,left=3pt]{no}(upper.north);
\draw[arr] (floor.east)--node[edge,above=2pt]{no}(span.west);
\draw[arr] (floor.south)--node[edge,left=3pt]{yes}(global.north);
\draw[arr] (span.east)--node[edge,above=2pt]{no}(core.west);
\draw[arr] (span.south)--node[edge,left=3pt]{yes}(fisher.north);
\draw[arr] (core.south)--node[edge,left=3pt]{yes}(highmass.north);
\draw[arr] (core.east) -| node[edge,pos=.15,above=2pt]{no} (nolower.north);
\end{tikzpicture}}\par
\caption{\textbf{Which rank--KL theorem to use}.
Start at the left and stop at the first result whose assumptions you can justify.
An upper bound shows that some rank-\(r\) update achieves at most the stated error;
a lower bound shows that every rank-\(r\) update incurs at least the stated error.
The Fisher bounds apply only to candidates whose scores remain within a fixed
range of the target scores. Consequently, their lower bound does not apply to
the unrestricted optimum \(\mathcal E_r\).}
\label{fig:decision-tree}
\end{figure}

\begin{table}[ht]
\centering
\small
\begin{tabularx}{\textwidth}{@{}l c c >{\raggedright\arraybackslash}X >{\raggedright\arraybackslash}X@{}}
\toprule
Route & Upper bound & Lower bound & Candidate class & Main price \\
\midrule
Global & \cmark & \cmark & unrestricted rank-$r$ & minimum target probability and moment assumptions on keys/queries \\
Target Fisher & \cmark & \cmark & rank-$r$ with a justified range of score differences & span-dependent constants and the target Fisher inner product \\
High mass & \xmark & \cmark & unrestricted rank-$r$ & tokens carrying most target mass, a conditional probability floor, and the key/query Gram on that set \\
\bottomrule
\end{tabularx}
\caption{The three rank--KL routes and their logical scope.}
\label{tab:three-laws}
\end{table}

\subsection{Target-Fisher bounds under a controlled score span}

Let $H_*=\diag(p_*)-p_*p_*^\trans$ and let
$R(d)=\max_i d_i-\min_i d_i$.  Define
\begin{equation}
 c_-(R)=\frac{R-1+e^{-R}}{R^2},
 \qquad
 c_+(R)=\frac{e^R-1-R}{R^2},
 \label{eq:fisher-constants}
\end{equation}
with both values set to $1/2$ at $R=0$.

\begin{theorem}[Target-Fisher score bounds]
\label{thm:fisher-comparison}
For every score displacement $d$,
\begin{equation}
 c_-(R(d))d^\trans H_*d
 \le \KL(p_*\|\softmax(z_*+d))
 \le c_+(R(d))d^\trans H_*d.
 \label{eq:fisher-comparison}
\end{equation}
Consequently, on the candidate class whose score differences from the target
have range at most $R_0$, the optimal KL is bounded above and below by
$c_+(R_0)$ and $c_-(R_0)$ times the corresponding target-Fisher quadratic
optimum.
\end{theorem}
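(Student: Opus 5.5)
Write $\KL(p_*\|\softmax(z_*+d))$ as a function of $t$ along the segment, namely
\begin{equation}
 f(t)=\log\sum_i p_{*,i}e^{t d_i}-t\,p_*^\trans d ,
\end{equation}
so that $f(0)=0$, $f'(0)=0$, and $f(1)$ is the desired KL. This follows from $\KL(p_*\|q)=\sum_i p_{*,i}\log(p_{*,i}/q_i)$ with $q_i=p_{*,i}e^{d_i}/\sum_j p_{*,j}e^{d_j}$. Taylor's theorem with integral remainder gives
\begin{equation}
 f(1)=\int_0^1(1-t)f''(t)\,dt, \qquad f''(t)=\mathrm{Var}_{q_t}(d),
\end{equation}
where $q_t\propto p_*e^{td}$ is the tilted distribution. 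At $t=0$ this gives $\mathrm{Var}_{p_*}(d)=d^\trans H_*d$. The whole proof reduces to comparing $\mathrm{Var}_{q_t}(d)$ with $\mathrm{Var}_{p_*}(d)$ with an exponential factor in $tR$.

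The key step is a multiplicative bound $e^{-tR}\mathrm{Var}_{p_*}(d)\le \mathrm{Var}_{q_t}(d)\le e^{tR}\mathrm{Var}_{p_*}(d)$. I would use the pairwise representation
\begin{equation}
 \mathrm{Var}_q(d)=\tfrac12\sum_{i,j}q_iq_j(d_i-d_j)^2,
\end{equation}
and compute $q_{t,i}q_{t,j}=p_{*,i}p_{*,j}e^{t(d_i+d_j)}/Z_t^2$ with $Z_t=\sum_k p_{*,k}e^{td_k}$. Set $m=\min_k d_k$ and $M=\max_k d_k$. Then $e^{2tm}\le e^{t(d_i+d_j)}\le e^{2tM}$ and $e^{tm}\le Z_t\le e^{tM}$, which bound the ratio $q_{t,i}q_{t,j}/(p_{*,i}p_{*,j})$ between $e^{-2tR}$ and $e^{2tR}$. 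That only gives $e^{\pm 2tR}$, which is too weak to produce $c_\pm(R)$. This is the main obstacle, and the fix I would try first is to bound $f''$ through its derivative. We have $f'''(t)$ equal to the third central moment of $d$ under $q_t$, and $|E_{q}[(d-\mu)^3]|\le R\,\mathrm{Var}_q(d)$ because $|d_i-\mu|\le R$. Hence $|(\log f'')'|\le R$, and integrating from $0$ yields $e^{-tR}f''(0)\le f''(t)\le e^{tR}f''(0)$. This gives exactly the single-exponent bound with no factor of $2$. If $f''(0)=0$, then $d$ is constant on the support, every term vanishes, and the statement is trivial.

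Plugging into the remainder gives
\begin{equation}
 f(1)\le d^\trans H_*d\int_0^1(1-t)e^{tR}dt=\frac{e^R-1-R}{R^2}\,d^\trans H_*d,
\end{equation}
and similarly $\int_0^1(1-t)e^{-tR}dt=(R-1+e^{-R})/R^2$ for the lower bound. Both integrals tend to $1/2$ as $R\to0$. A quick check confirms that these are the integrals of $(1-t)e^{\pm tR}$, so this route matches $c_\pm$ exactly. For the consequence, note that $c_+$ is increasing and $c_-$ is decreasing in $R$. On the class with $R(d_M(u))\le R_0$ for all $u$, each pointwise KL therefore lies between $c_-(R_0)$ and $c_+(R_0)$ times $d_M^\trans H_* d_M$. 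The same holds after taking expectations and the infimum over the class. Note that $H_*$ annihilates $\one$, so centering of $d$ is immaterial.
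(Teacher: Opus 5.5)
Your proof is correct, but its key step differs from the paper's. You compare $f''(t)=\mathrm{Var}_{q_t}(d)$ with $f''(0)$ through a generalized self-concordance (Gr\"onwall) argument. Since $|f'''|\le R f''$, you get $e^{-tR}f''(0)\le f''(t)\le e^{tR}f''(0)$.

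The paper instead combines the coordinatewise likelihood-ratio bound $e^{-tR}p_{*,i}\le q_{t,i}\le e^{tR}p_{*,i}$ with the variational formula $v^\trans H(q)v=\min_c\sum_i q_i(v_i-c)^2$. You had essentially derived that ratio bound already, since $q_{t,i}/p_{*,i}=e^{td_i}/Z_t$ and $e^{tm}\le Z_t\le e^{tM}$. The factor $2$ you ran into came from the pairwise formula, which is quadratic in $q$. The variational formula is linear in $q$, so the single exponent comes out at once. For the lower bound, compare before minimizing over $c$. For the upper bound, evaluate at the $p_*$-optimal $c$.

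What each route buys:
\begin{itemize}
\item The paper's route gives a Loewner sandwich $e^{-tR}H_*\preceq H(q_t)\preceq e^{tR}H_*$ in every direction. It needs no third derivatives and no degenerate-case split.
\item Your route is stated only along $d$. It extends to all directions via $\tfrac{d}{dt}\,v^\trans H(q_t)v=\E_{q_t}[(d-\mu_d)(v-\mu_v)^2]$. It is the mechanism behind the self-concordant analyses the paper cites, and it carries over to losses without an explicit Gibbs-tilt form.
\end{itemize}

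Two small points. First, taking $\log f''$ needs $f''>0$ on all of $[0,1]$, not just at $0$. This holds because every $q_t$ has full support, so $\mathrm{Var}_{q_t}(d)>0$ whenever $d$ is nonconstant. Alternatively, avoid logarithms by noting that $(e^{-tR}f'')'\le0$ and $(e^{tR}f'')'\ge0$. Second, the monotonicity of $c_\pm$ that you invoke for the consequence follows directly from your integral representation: for each fixed $t$, $(1-t)e^{-tR}$ decreases in $R$ and $(1-t)e^{tR}$ increases.
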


There is no explicit minimum-probability constant; small target
probabilities enter through the Fisher matrix $H_*$.  More importantly, the
optimized class is restricted by the span condition.  Because that class is
a subset of the unrestricted rank-$r$ class, its lower bound cannot be
transferred to $\mathcal E_r$.  The spectral statements, including the extra
span condition needed for a matching upper bound, and the proofs are in
Appendix~\ref{app:fisher}.

\subsection{An unrestricted high-mass lower bound}

For each context, let $S(u)$ be a measurable token set chosen from the target
before optimizing the candidate.  Suppose it carries at least $1-\delta$ of
the target mass and the conditional target distribution on $S(u)$ has minimum
probability at least $a_{\core}$.

\begin{theorem}[High-mass lower bound]
\label{thm:high-mass}
Under the preceding assumptions,
\begin{equation}
 \mathcal E_r
 \ge \frac{(1-\delta)a_{\core}}{2e^2}
 \inf_{\rank(M)\le r}
 \E\psi\!\left(
   \left\|\PiC{S(u)}(z_M(u)-z_*(u))_{S(u)}\right\|_2
 \right).
 \label{eq:high-mass-robust}
\end{equation}
\end{theorem}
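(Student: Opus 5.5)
The plan is to reduce the full KL to a KL between conditional distributions on $S(u)$, and then apply the lower half of Theorem~\ref{thm:global-softmax} to the restricted problem. Fix a context $u$ and a candidate $M$, and write $S=S(u)$, $p=p_*(\cdot\mid u)$, $q=p_M(\cdot\mid u)$. Let $p_S$ and $q_S$ denote the conditional distributions on $S$, that is, $p_S(i)=p_i/p(S)$ for $i\in S$, and similarly for $q_S$.

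\textbf{Step 1: chain rule.} Apply the chain rule for KL to the two-stage experiment that first reveals whether the token lies in $S$ and then reveals the token itself:
\[
\KL(p\|q)=\KL\bigl(\mathrm{Ber}(p(S))\,\|\,\mathrm{Ber}(q(S))\bigr)+p(S)\,\KL(p_S\|q_S)+p(S^c)\,\KL(p_{S^c}\|q_{S^c}).
\]
All three terms are nonnegative. Dropping the first and last terms and using $p(S)\ge 1-\delta$ gives
\[
\KL(p\|q)\ge(1-\delta)\,\KL(p_S\|q_S).
\]
This is the only place the mass condition enters, and it holds for every candidate without restricting the candidate class.

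\textbf{Step 2: the conditionals are softmaxes.} Both conditionals are softmaxes of the restricted logits:
\[
p_S=\softmax\bigl(z_*(u)_S\bigr),\qquad q_S=\softmax\bigl(z_M(u)_S\bigr),
\]
because renormalizing over $S$ cancels the partition function. Softmax is shift-invariant, so we may replace the restricted difference by its centered version $d_S=\PiC{S}(z_M-z_*)_S$ without changing $q_S$. This gives $q_S=\softmax(z_{*,S}+d_S)$ with $\one^\trans d_S=0$. By hypothesis $\min_{i\in S}p_S(i)\ge a_{\core}$. The lower bound in Theorem~\ref{thm:global-softmax}, applied in dimension $|S|$, therefore yields
\[
\KL(p_S\|q_S)\ge \frac{a_{\core}}{2e^2}\,\psi(\|d_S\|_2).
\]
If $|S|=1$, then $d_S=0$ and the bound is trivial.

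\textbf{Step 3: average and optimize.} Combining the two steps gives the pointwise bound
\[
\KL(p\|q)\ge \frac{(1-\delta)a_{\core}}{2e^2}\,\psi(\|d_S\|_2).
\]
Taking expectations over $u\sim P$ is legitimate because $S(u)$ is measurable and chosen from the target independently of $M$, so the right-hand side is a measurable function of $u$ for each fixed $M$. This gives, for every $M$ of rank at most $r$,
\[
\E\,\KL\ge \frac{(1-\delta)a_{\core}}{2e^2}\,\E\,\psi(\|d_S\|_2)\ge \frac{(1-\delta)a_{\core}}{2e^2}\inf_{\rank(M')\le r}\E\,\psi(\|d_S\|_2).
\]
Taking the infimum over $M$ on the left yields \eqref{eq:high-mass-robust}.

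\textbf{Main obstacle.} The main point needing care is Step 1. One must confirm that the chain-rule decomposition holds even when $q$ places tiny mass on $S$; all terms are finite, since $q$ has full support as a softmax of finite logits. One must also confirm that discarding the $S^c$ and Bernoulli terms loses nothing beyond the constant factor. Everything else is a direct invocation of Theorem~\ref{thm:global-softmax} on the subvector.
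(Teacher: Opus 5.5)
Your proposal is correct and follows essentially the same route as the paper. The paper decomposes the KL via Lemma~\ref{lem:conditional-kl}, keeps only the term $p_*(S)\,\KL(p_*^S\|p_M^S)$, and applies the lower half of Theorem~\ref{thm:global-softmax} to the restricted softmax with floor $a_S\ge a_{\core}$. It then uses $p_*(S)\ge 1-\delta$, takes expectations, and takes the infimum over the unrestricted rank-$r$ class, exactly as you do.
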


This is a lower bound on the original unrestricted $\mathcal E_r$.  Under
additional control of the key/query Gram and moments on $S(u)$, the
right-hand side reduces to the same spectral tail, but only on those tokens.
It does not give a full-KL upper bound:
matching conditional scores inside $S(u)$ need not match the total mass
assigned to $S(u)$.  The proof and spectral forms are in
Appendix~\ref{app:highmass}.

\section{Saturation changes the relevant notion of rank}
\label{sec:saturation}

The main theorem describes approximation of finite logits, but softmax also has a
boundary regime in which logits can diverge while their distributions
converge. Example~\ref{ex:saturation} shows the basic saturation mechanism before the
rank separation is stated formally.

\begin{examplebox}[label={ex:saturation}]{Different logits, the same saturated attention}
For two tokens,
\begin{equation*}
 \softmax(T,0)\longrightarrow(1,0),
 \qquad
 \softmax(2T,0)\longrightarrow(1,0)
 \quad\text{as }T\to\infty.
\end{equation*}
The difference between the two logit vectors grows with $T$, but their
attention distributions converge to the same limit.  The results below prove
a stronger effect: for explicit families of targets, saturation reduces the
rank needed to approximate the entire attention function.
\end{examplebox}

For a boundary target $P_\infty$ and an allowed centered score space $L$, define
its \emph{$L$-relative softmax closure rank} $r_{\rm cl}^{L}(P_\infty)$ as the
smallest $r$ for which there is a sequence of score matrices of rank at most
$r$, with every score column in $L$, whose columnwise softmax distributions
converge to $P_\infty$.  For query adaptation, $L$ is the score space reachable
through the fixed key matrix, so the restriction is part of the LoRA model.
This rank can be smaller than the rank required to match every finite target
logit exactly.

\begin{theorem}[Exact closure rank of an isolated-triple Walsh family]
\label{thm:exact-closure}
For every $k\ge3$, there is a query-only Walsh-attention family with $k$
contexts and fewer than $4k^2+8$ token positions such that every exact
realization with finite logits has update rank $k$.  For the limiting target
$P_\infty^{(k)}$ of this family and its Walsh score space $L_k$, the exact
relative closure rank is
\begin{equation}
 r_{\mathrm{cl}}^{L_k}(P_\infty^{(k)})
 =k-\lfloor k/3\rfloor.
 \label{eq:closure-rank}
\end{equation}
This equality is for the isolated-triple family constructed in
Appendix~\ref{app:saturation}; other target families can have smaller closure
rank, as the construction below demonstrates.
\end{theorem}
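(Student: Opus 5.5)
The plan has three parts: construct a block-diagonal family, prove exact rank $k$ for finite realizations, and prove matching upper and lower bounds $k-\lfloor k/3\rfloor$ on the closure rank.

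\textbf{Construction.} Group the $k$ contexts into $\lfloor k/3\rfloor$ disjoint triples and $k-3\lfloor k/3\rfloor$ leftover singletons. Each context $c$ gets its own query activation $h_c=e_c$, so the score matrix has one column per context. Each context also gets a block of key rows built from Walsh (Hadamard/parity) vectors. Because $h_c=e_c$, the finite score columns are $\beta K_c M e_c$. After centering, the reachable score space $L_k$ is the direct sum of the Walsh spans of the individual blocks.

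The target update $\Delta_*$ is chosen so that the centered target score matrix is the diagonal-like matrix whose $c$-th column is a fixed Walsh vector $w_c$ in block $c$. Exact finite realization then means that the centered scores equal these columns exactly. Any such $M$ must reproduce $k$ linearly independent columns supported on disjoint blocks, so $\rank(M)\ge k$, and $\Delta_*$ itself attains rank $k$. The token count is of order $k$ blocks, each of size $O(k)$, which gives the bound below $4k^2+8$. I will fix the exact sizes once the triple gadget below is pinned down.

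\textbf{Upper bound on the closure rank.} The key gadget is an isolated triple: three contexts whose limiting distributions $P_\infty$ are uniform on faces selected by sign patterns of Walsh vectors. Choose, for contexts $c_1,c_2,c_3$ in a triple, Walsh functions $w_1,w_2,w_3=w_1w_2$. Take score columns $T\,w_1$, $T\,w_2$, $T(w_1+w_2)$, embedded in the corresponding blocks through a shared coordinate system. As $T\to\infty$, the softmax of $T(w_1+w_2)$ concentrates on the positions where $w_1=w_2=1$. These are exactly the argmax positions of $w_1w_2$ restricted to $\{w_1=1\}$, so the third limit matches the face prescribed for $c_3$. This gives rank $2$ per triple instead of $3$, and $k-\lfloor k/3\rfloor$ overall.

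For this to work, the three contexts must read a common Walsh-coordinate system. I would therefore design the key blocks of the three contexts in a triple to be copies of the same Hadamard block, so that one $2$-dimensional row space of $M$ serves all three. The finite target uses the genuinely independent column $w_1w_2$, which is why finite realization still needs rank $3$ per triple.

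\textbf{Lower bound on the closure rank (main obstacle).} Suppose $S^{(t)}$ has rank at most $r$, its columns lie in $L_k$, and $\softmax(S^{(t)})\to P_\infty$. Because the triples are isolated, meaning they use disjoint coordinate copies, I will argue blockwise. It suffices to show:
\begin{itemize}
\item each singleton context forces one rank unit;
\item each triple forces two rank units;
\item the total rank is additive over blocks.
\end{itemize}

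The hard part is additivity, since one global low-rank $M$ is shared across blocks. I would handle it with tropicalization. Normalize $S^{(t)}$ by its growth rate and pass to a subsequence along which the directions converge. Then use the fact that the supports of $P_\infty$ are determined by the leading-order face structure. Taking the limit of the column spaces in the Grassmannian gives a rank-$\le r$ limiting matrix. Its columns must be nonzero and must lie in distinct Walsh directions of different blocks, because of the face constraints (a nonuniform face forces a nonzero leading term in that block). Within a triple, two of the three columns can be independent at most pairwise, so the triple contributes at least $2$. Singletons contribute $1$ each. Disjointness of the coordinate copies then makes the limiting column ranks add.

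I expect the delicate step to be ruling out cancellations at different growth scales. Leading terms may vanish at one scale and reappear at a lower scale. I would address this with an iterated, lexicographic expansion of the sequence by scales, a Puiseux-type decomposition, and by showing that each face constraint is witnessed at some scale by an independent direction.
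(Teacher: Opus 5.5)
\textbf{Gap 1: the rank-two triple gadget does not reach the target.} You state that the finite target for $c_3$ is $Tw_3=Tw_1w_2$. Its limit is therefore uniform on the halfspace $\{w_1w_2=1\}=\{w_1=w_2=1\}\cup\{w_1=w_2=-1\}$. But $\softmax(T(w_1+w_2))$ concentrates on the single quadrant $\{w_1=w_2=1\}$; ``argmax of $w_1w_2$ restricted to $\{w_1=1\}$'' is not the target face. Half of the target mass then receives vanishing candidate mass, and the KL diverges.

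No score in $\mathrm{span}\{w_1,w_2\}$ can repair this. The function $\lambda w_1+\mu w_2$ takes equal values on $(1,1)$ and $(-1,-1)$ only if $\lambda+\mu=0$, and then one of the other two quadrants scores at least as high. A rank-two path must therefore put weight on the third character in every column, at two separated scales. The paper uses coefficient columns over $(x,y,xy)$:
$b_2=(T,T+g,-T)^\trans$, $b_3=(T,-T,T+g)^\trans$, $b_1=T^{-3/4}(b_2+b_3)$, with $g=\sqrt T$.
Contexts $2,3$ then score exactly $T+g$ on their whole halfspace and at most $T-g$ off it. Context $1$ scores $2T^{1/4}\pm2T^{-1/4}$ on $\{x=1\}$ versus $-2T^{1/4}$ off it. Meanwhile $B_T=[b_1\ b_2\ b_3]$ has rank two.

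\textbf{Gap 2: the mechanism for additivity across blocks.} This affects both the finite-rank claim and the closure lower bound. The inference ``$k$ independent score columns on disjoint blocks, hence $\rank(M)\ge k$'' concerns the score matrix, not $M$.

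Suppose, as your isolation by disjoint coordinate copies suggests, that contexts of different triples read disjoint key coordinates. Then the off-diagonal blocks of $M$ never affect any score. Repeating the block $TI_3$ in every block position (i.e.\ $T\,\one\one^\trans\otimes I_3$ in the triples' Walsh coordinates) realizes all finite targets with rank $3$. Likewise $\one\one^\trans\otimes B_T$ is a rank-two closure path. So for that design additivity is false, not merely delicate.

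The paper instead uses one key matrix shared by all contexts, with orthonormal Walsh columns $\chi_a$, $a\in\mathcal A$. The set $\mathcal A$ is built greedily in a single $\mathbb F_2$-space, avoiding sumsets (which gives $n<4k^2+8$), so that the designated triples are its \emph{only} additive triples. Then:
\begin{itemize}
\item $K^\trans K=I$ forces every exact realization to equal $TI_k$.
\item In the limit, $\chi_b=\chi_{b+a}$ on $S_a=\{\chi_a=1\}$ together with convergence to uniform gives $m_{b,a}+m_{b+a,a}\to0$.
\item Additive isolation turns this into $o(1)$ cross-block coefficients, and $m_{c,a}=-m_{b,a}+o(1)$ inside a block $\{a,b,c\}$.
\end{itemize}
Isolation is essential: the seven-character $\mathbb F_8$ blocks, which are full of additive triples, reach rank $4$ on $7$ contexts.

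The scale problem you plan to handle with a Puiseux-type expansion then disappears. The support gap forces $t_{m,a}=m_{a,a}\to\infty$ and $|m_{b,a}|\le t_{m,a}(1+o(1))$. Dividing each column by its own $t_{m,a}$ preserves rank and gives a bounded sequence. Its limit points are block diagonal with triple blocks $\left[\begin{smallmatrix}1&\beta&\gamma\\ \alpha&1&-\gamma\\ -\alpha&-\beta&1\end{smallmatrix}\right]$. If such a block had rank one, its $2\times2$ minors would force $\alpha\beta=1$, $\beta\gamma=1$ and $\alpha\gamma=-1$, which is impossible over $\R$. Closedness of $\{\rank\le r\}$ then yields the bound $2q+s=k-\lfloor k/3\rfloor$.
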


The construction groups Walsh characters into isolated triples.  A rank-two
path per triple gives the upper bound.  For the lower bound, restricted
Fourier identities and columnwise normalization force every limiting triple
block to retain rank at least two.  Saturation therefore yields a genuine
constant-factor reduction, but in this family it cannot collapse the
rank of the attention function to $o(k)$.

A separate linear-token family gives a stronger achieved ratio: with fewer
than $8(k+1)$ token positions, rank at most
\begin{equation}
 k-\max\{\lfloor k/3\rfloor,3\lfloor k/7\rfloor\}
 \label{eq:linear-family-rank}
\end{equation}
attains vanishing KL, reaching ratio $4/7$ on complete seven-context blocks.
That construction concerns a different target family and does not identify
its minimum closure rank.  The two results and their proofs are kept separate in
Appendix~\ref{app:saturation}.

\section{Fused multi-head LoRA}
\label{sec:multihead}

Standard query LoRA for multi-head attention often constrains one fused
projection matrix, vertically divided into head blocks.  This is not the same
as assigning an independent rank to every head: one fused rank direction can
serve several heads.

For $j=1,\ldots,H$, let $K_j(u)\in\R^{n_j(u)\times d_{k,j}}$ and let all heads
share the query activation $h(u)\in\R^{d_h}$.  Stack the head updates as
$M=(M_1^\trans,\ldots,M_H^\trans)^\trans$ and impose the single fused
constraint $\rank(M)\le r$.  Assume the target is generated by the similarly
stacked dense update $\Delta_*^{\MH}$.  Define
\begin{align}
 d_{M,j}(u)&=\beta_j\PiC{n_j(u)}K_j(u)(M_j-\Delta_{*,j})h(u),\\
 \mathcal E_r^{\MH}
 &=\inf_{\rank(M)\le r}\E\sum_{j=1}^H
   \KL(p_{*,j}(\cdot\mid u)\|p_{M,j}(\cdot\mid u)),\\
 G_{\MH}(u)&=\operatorname{blockdiag}\!\left(
   \beta_j^2K_j(u)^\trans\PiC{n_j(u)}K_j(u):j=1,\ldots,H\right),
 \qquad G_{\MH}=\E G_{\MH}(u),\\
 T_r^{\MH}&=\sum_{j>r}\sigma_j\!\left(
 G_{\MH}^{1/2}\Delta_*^{\MH}\Sigma^{1/2}\right)^2,
 \qquad \Sigma=\E[h(u)h(u)^\trans].
 \label{eq:multihead-definitions}
\end{align}

\begin{theorem}[Fused multi-head rank--KL bounds]
\label{thm:multihead}
Suppose every target probability is at least $a_{\min}>0$ almost surely,
$G_{\MH}(u)$ is independent of $h(u)$,
$G_{\MH}(u)\preceq\Lambda_{\MH}G_{\MH}$ almost surely, and
\begin{equation}
 \E\|Ch(u)\|_2^4
 \le\kappa_h\bigl(\E\|Ch(u)\|_2^2\bigr)^2
 \quad\text{for every compatible deterministic }C.
 \label{eq:multihead-moment}
\end{equation}
Then
\begin{equation}
 \frac{a_{\min}}{2e^2(1+\Lambda_{\MH}\sqrt{\kappa_h})}
 \psi(\sqrt{T_r^{\MH}})
 \le\mathcal E_r^{\MH}
 \le\min\{T_r^{\MH}/4,\sqrt{2H T_r^{\MH}}\}.
 \label{eq:multihead-law}
\end{equation}
\end{theorem}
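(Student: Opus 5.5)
The plan is to reduce the multi-head statement to the single-head arguments, applied headwise and then summed. The reduction rests on one identity: with $A=M-\Delta_*^{\MH}$ stacked into blocks $A_j=M_j-\Delta_{*,j}$, the total squared centered score error is a single quadratic form,
\begin{equation*}
 \sum_{j=1}^H\|d_{M,j}(u)\|_2^2
 =h(u)^\trans A^\trans G_{\MH}(u)A\,h(u).
\end{equation*}
This holds because $G_{\MH}(u)$ is block diagonal with blocks $\beta_j^2K_j(u)^\trans\PiC{n_j(u)}K_j(u)$. Given independence of $G_{\MH}(u)$ and $h(u)$, taking expectations gives $\E\sum_j\|d_{M,j}\|_2^2=\|G_{\MH}^{1/2}A\Sigma^{1/2}\|_\F^2$, exactly as in Equation~\eqref{eq:separable-geometry}. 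The fused constraint $\rank(M)\le r$ is a rank constraint on the whole stacked matrix. Eckart--Young--Mirsky therefore applies to $D_*^{\MH}=G_{\MH}^{1/2}\Delta_*^{\MH}\Sigma^{1/2}$, and the minimum of this quadratic form over feasible $M$ is $T_r^{\MH}$. The weighted-SVD candidate attains it, and every candidate satisfies $\E\sum_j\|d_{M,j}\|^2\ge T_r^{\MH}$. This is the same step as in the proof of Theorem~\ref{thm:main-spectral}, so I would reuse it verbatim with $G$ replaced by $G_{\MH}$.

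\textbf{Upper bound.} Take $M$ to be the weighted-SVD candidate and apply the upper half of Theorem~\ref{thm:global-softmax} to each head. This gives $\sum_j\KL_j\le\sum_j\psi_{\rm up}(\|d_{M,j}\|)$.

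For the quadratic branch, use $\sum_j\|d_j\|^2/4$; its expectation is $T_r^{\MH}/4$.

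For the linear branch, use $\sqrt2\sum_j\|d_j\|\le\sqrt{2H}\bigl(\sum_j\|d_j\|^2\bigr)^{1/2}$, which is Cauchy--Schwarz over heads. Jensen then gives expectation at most $\sqrt{2H\,T_r^{\MH}}$. Each branch bounds the sum separately, so the minimum of the two also bounds it.

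\textbf{Lower bound.} For an arbitrary rank-$r$ $M$, the lower half of Theorem~\ref{thm:global-softmax} gives, per head, $\KL_j\ge\frac{a_{\min}}{2e^2}\psi(\|d_j\|)$. I then need a superadditivity-type inequality $\sum_j\psi(x_j)\ge c\,\psi\bigl((\sum_jx_j^2)^{1/2}\bigr)$ with no $H$-dependence. It holds with $c=1$. If all $x_j\le1$, then $\sum x_j^2\ge\psi(\|x\|)$. In general, $\psi(x_j)\ge\min\{x_j^2,x_j\}$ and $\psi(t)=\min\{t^2,t\}$, so splitting the heads into small and large groups gives $\sum_j\psi(x_j)\ge\min\{\|x\|^2,\|x\|\}$. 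This uses $\sum_{\text{large}}x_j\ge(\sum_{\text{large}}x_j^2)^{1/2}$ together with subadditivity of $t\mapsto\min\{t^2,t\}$ in the right sense; I would check this elementary step carefully.

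This reduces matters to lower-bounding $\E\,\psi(\|e(u)\|)$, where $\|e(u)\|^2=h^\trans A^\trans G_{\MH}(u)Ah$. From here I copy the single-head argument of Appendix~\ref{app:main-proof}, using $\psi(t)\ge t^2-\ldots$ via the estimate $\E\min\{X,\sqrt X\}\ge$ (second moment)/(fourth moment) control. Conditioning on $G_{\MH}(u)$ and using $G_{\MH}(u)\preceq\Lambda_{\MH}G_{\MH}$ with $C=G_{\MH}^{1/2}A$ gives $\E\|e\|^4\le\Lambda_{\MH}^2\kappa_h(\E\|e\|^2)^2$. The Paley--Zygmund/Cauchy--Schwarz step then yields the factor $(1+\Lambda_{\MH}\sqrt{\kappa_h})^{-1}$ times $\psi(\sqrt{\E\|e\|^2})\ge\psi(\sqrt{T_r^{\MH}})$, by monotonicity of $\psi$.

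The main obstacle is the headwise aggregation of $\psi$ in the lower bound. I expect it to go through with constant one, as argued above. If it fails, a fallback is to lower-bound the sum by the single quadratic branch and the linear branch separately. The rest is a direct transcription of the single-head proof with the block-diagonal Gram matrix.
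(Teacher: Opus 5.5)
Your proposal is correct and follows the paper's route: the block-diagonal identity with independence and weighted Eckart--Young--Mirsky gives $T_r^{\MH}$, then headwise global softmax bounds, Cauchy--Schwarz over heads plus Jensen for the $\sqrt{2H}$ branch, the aggregation inequality $\sum_h\psi(x_h)\ge\psi\bigl((\sum_hx_h^2)^{1/2}\bigr)$ with constant one, and the moment bridge with $\kappa=\Lambda_{\MH}^2\kappa_h$. The aggregation step you flagged is the paper's left inequality, proved there by showing $\psi(x_h)\ge x_h^2/s$ when $s>1$; it also follows immediately because $\psi(x)=\phi(x^2)$ with $\phi(u)=\min\{u,\sqrt u\}$ concave and $\phi(0)=0$, hence subadditive.
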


The upper/lower bracket can widen by a factor of order $\sqrt H$ because the
objective sums KL over heads; the aggregation inequality producing this
factor is tight without further assumptions on how error is distributed
across heads.
If each head instead has its own adapter and integer rank $r_h$, the feasible
set separates after the ranks are fixed; allocating a total budget $R$ is the
discrete problem $\min_{\sum_h r_h\le R}\sum_h\mathcal E_{h,r_h}$.
The proof and the underlying pointwise aggregation inequality are in
Appendix~\ref{app:multihead}.

\section{Joint query/key LoRA}
\label{sec:jointqk}

Practical LoRA may update both queries and keys.  Let
$Q_0,K_0\in\R^{p\times d}$ be pretrained factors and let
$A,B\in\R^{p\times d}$ be query and key updates with ranks at most
$r_Q,r_K$.  On ordinary dot-product scores they change the scores by the
effective update
\begin{equation}
 C(A,B)
 =(K_0+B)^\trans(Q_0+A)-K_0^\trans Q_0
 =K_0^\trans A+B^\trans(Q_0+A).
 \label{eq:effective-qk}
\end{equation}
The bound $\rank(C(A,B))\le r_Q+r_K$ is known~\cite{zeng2024expressive}:
grouping the bilinear term gives the sharp containment
\begin{equation}
 \rank(C(A,B))\le r_Q+r_K.
 \label{eq:qk-rank}
\end{equation}

Let a context provide token representations
$X(u)\in\R^{n(u)\times d}$ and a query representation $h(u)\in\R^d$.  For
target factors $(A_*,B_*)$, put $C_*=C(A_*,B_*)$ and define the centered score
error and actual factor-class objective
\begin{align}
 d_{A,B}(u)
 &=\beta\PiC{n(u)}X(u)[C(A,B)-C_*]h(u),\\
 \mathcal E^{\QK}_{r_Q,r_K}
 &=\inf_{\substack{\rank(A)\le r_Q\\\rank(B)\le r_K}}
 \E\KL(p_*\|p_{A,B}).
 \label{eq:qk-objective}
\end{align}
Let
\begin{equation}
 G_X(u)=\beta^2X(u)^\trans\PiC{n(u)}X(u),
 \qquad G_X=\E G_X(u),
 \qquad \Sigma=\E[h(u)h(u)^\trans].
 \label{eq:qk-geometry}
\end{equation}
If $G_X(u)$ and $h(u)$ are independent, the mean squared score error equals
$\|G_X^{1/2}[C(A,B)-C_*]\Sigma^{1/2}\|_\F^2$.  Define
\begin{equation}
 D_*^{\eff}=G_X^{1/2}C_*\Sigma^{1/2},
 \qquad
 T_s^{\eff}=\sum_{j>s}\sigma_j(D_*^{\eff})^2,
 \qquad s=r_Q+r_K.
 \label{eq:qk-tail}
\end{equation}
Let $\mathcal S_s^{\rm svd}$ be the set of rank-$s$ matrices attaining the
weighted approximation error $T_s^{\eff}$.  An element of this set need not be
writable as query and key updates of size $p\times d$ with the separate rank
budgets.  Define the extra error from that factorization gap by
\begin{equation}
 \rho_{r_Q,r_K}
 =\inf_{C_s\in\mathcal S_s^{\rm svd}}
   \inf_{\substack{\rank(A)\le r_Q\\\rank(B)\le r_K}}
 \|G_X^{1/2}[C(A,B)-C_s]\Sigma^{1/2}\|_\F.
 \label{eq:qk-rho}
\end{equation}

\begin{theorem}[Joint Q/K rank--KL bounds]
\label{thm:joint-qk}
Suppose $\min_i p_{*,i}(u)\ge a>0$ almost surely, $G_X(u)$ is independent of
$h(u)$ so that the mean-squared identity above holds, and, for every feasible pair $(A,B)$,
\begin{equation}
 \E\|d_{A,B}(u)\|_2^4
 \le\kappa\bigl(\E\|d_{A,B}(u)\|_2^2\bigr)^2.
 \label{eq:qk-moment}
\end{equation}
Then
\begin{equation}
 \frac{a}{2e^2(1+\sqrt\kappa)}
   \psi(\sqrt{T_s^{\eff}})
 \le \mathcal E^{\QK}_{r_Q,r_K}
 \le \psi_{\rm up}\!\left(\sqrt{T_s^{\eff}}+\rho_{r_Q,r_K}\right).
 \label{eq:joint-qk-law}
\end{equation}
\end{theorem}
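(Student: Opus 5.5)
The proof has two halves, and both reduce to Corollary~\ref{cor:functional-equivalence} applied to the factor class $\{(A,B):\rank(A)\le r_Q,\ \rank(B)\le r_K\}$ instead of the class of rank-$r$ query updates. The pointwise bound of Theorem~\ref{thm:global-softmax} does not care how the centered score error $d_{A,B}(u)$ is produced. Taking expectations and infima over the factor class therefore gives
\[
\tfrac{a}{2e^2}\inf_{A,B}\E\psi(\|d_{A,B}(u)\|_2)\le\mathcal E^{\QK}_{r_Q,r_K}\le\inf_{A,B}\E\psi_{\rm up}(\|d_{A,B}(u)\|_2).
\]
What remains is to relate $\E\|d_{A,B}\|_2^2=\|G_X^{1/2}[C(A,B)-C_*]\Sigma^{1/2}\|_\F^2$, which is the independence identity, to $T_s^{\eff}$ on each side.

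\textbf{Upper bound.} Fix $\varepsilon>0$ and pick $C_s\in\mathcal S_s^{\rm svd}$ and a feasible pair $(A,B)$ with $\|G_X^{1/2}[C(A,B)-C_s]\Sigma^{1/2}\|_\F\le\rho_{r_Q,r_K}+\varepsilon$. By the triangle inequality in the weighted Frobenius norm,
\[
(\E\|d_{A,B}\|_2^2)^{1/2}\le\|G_X^{1/2}[C_s-C_*]\Sigma^{1/2}\|_\F+\rho+\varepsilon=\sqrt{T_s^{\eff}}+\rho+\varepsilon.
\]
The function $\psi_{\rm up}(t)=\min\{t^2/4,\sqrt2 t\}$ is a minimum of a function convex in $t^2$ and a function concave in $t$. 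So I would bound each branch separately:
\[
\E\|d\|^2/4\le(\sqrt{T}+\rho+\varepsilon)^2/4,\qquad \sqrt2\,\E\|d\|\le\sqrt2(\E\|d\|^2)^{1/2}\quad\text{(Jensen)},
\]
and use $\E\min\le\min\E$. Letting $\varepsilon\to0$ and using continuity of $\psi_{\rm up}$ gives the right-hand side, exactly as in the upper-bound part of Theorem~\ref{thm:main-spectral}.

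\textbf{Lower bound.} For any feasible pair, $\rank C(A,B)\le s$ by \eqref{eq:qk-rank}. The weighted Eckart--Young theorem applied to $G_X^{1/2}C\Sigma^{1/2}$, whose rank is at most $s$, then gives $Q:=\E\|d_{A,B}\|_2^2\ge T_s^{\eff}$. Next I convert this second-moment bound into a bound on $\E\psi(\|d\|)$, reusing the argument of Theorem~\ref{thm:main-spectral}, where $\kappa$ now plays the role of $\Lambda^2\kappa_h$. Since $\psi(t)\ge t^2$ for $t\le1$ and $\psi(t)\ge t$ for $t>1$, I write
\[
\E\psi(\|d\|)\ge\E\bigl[\|d\|^2\one_{\|d\|\le\tau}\bigr]+\E\bigl[\|d\|\one_{\|d\|>\tau}\bigr],
\]
and use Cauchy--Schwarz with the fourth-moment assumption \eqref{eq:qk-moment} to control the tail contribution: $\E[\|d\|^2\one_{\|d\|>\tau}]\le\sqrt{\kappa}\,Q\,\Pr(\|d\|>\tau)^{1/2}$, and $\E[\|d\|\one_{\|d\|>\tau}]\ge\E[\|d\|^2\one_{\|d\|>\tau}]/\sqrt{\E\|d\|^2}$-type estimates with a Paley--Zygmund step. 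The target is the inequality $\E\psi(\|d\|)\ge\psi(\sqrt Q)/(1+\sqrt\kappa)$. Then monotonicity of $\psi$ and $Q\ge T_s^{\eff}$ finish the lower bound, and taking the infimum over $(A,B)$ plus the prefactor $a/(2e^2)$ yields \eqref{eq:joint-qk-law}.

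\textbf{Main obstacle.} The main obstacle is the moment-to-$\psi$ step with exactly the constant $1+\sqrt\kappa$. If the main theorem's appendix already contains a lemma of the form ``$\E X^4\le\kappa(\E X^2)^2\Rightarrow\E\psi(X)\ge\psi(\sqrt{\E X^2})/(1+\sqrt\kappa)$'', I would cite it directly. Otherwise I would prove it by splitting into the two cases $\sqrt Q\le1$ and $\sqrt Q>1$. In each case I would lower bound $\E\min\{X^2,X\}$ by $\E X^2-\E[X^2\one_{X>1}]$, or by the analogous scaled quantity, and balance the pieces with Cauchy--Schwarz. Everything else transfers verbatim from the query-only proof because only the rank containment \eqref{eq:qk-rank} is new.
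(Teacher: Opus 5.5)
Your proposal is correct and follows the paper's proof essentially step for step. The lower bound uses the lower half of Theorem~\ref{thm:global-softmax} on the factor class, then rank containment \eqref{eq:qk-rank} with weighted Eckart--Young--Mirsky to get $\E\|d_{A,B}\|_2^2\ge T_s^{\eff}$, then the fourth-moment bridge. The upper bound uses a near-optimal pair $(A,B)$, the weighted triangle inequality, and the Jensen bound \eqref{eq:app-robust-upper}. The lemma you hoped to cite exists as Lemma~\ref{lem:moment-bridge}. It is proved via $\psi(x)\ge x^2/(1+x)$ and Cauchy--Schwarz, giving $\E\psi(X)\ge\mu/(1+\sqrt{\kappa\mu})$, so your truncation fallback is unnecessary. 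As sketched, that fallback would also fall short of the constant: dropping the tail and bounding $\E[X^2\one_{X>1}]$ by Cauchy--Schwarz and Markov yields only $\mu(1-\sqrt{\kappa\mu})$, which does not give the constant $1+\sqrt\kappa$.
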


The lower bound follows from containment in the effective rank-$s$ class.  The
reverse containment generally fails: every updated width-$p$ head also
satisfies $\rank(K_0^\trans Q_0+C)\le p$.  Exact one-sided support
conditions or a sequential two-sided factorization make
$\rho_{r_Q,r_K}=0$; otherwise that extra error can be large.  The realization lemmas,
a constructive target-factor upper bound, and a fused multi-head lower bound
are proved in Appendix~\ref{app:jointqk}.

The two sides form a useful bracket only when
$\rho_{r_Q,r_K}$ is comparable to, or smaller than,
$\sqrt{T_s^{\eff}}$.  In general, computing $\rho_{r_Q,r_K}$ is a nonconvex
factorization problem over the weighted rank-$s$ optimizers; the theorem does
not provide a tractable procedure or a universal upper bound for it.  The
cases with $\rho_{r_Q,r_K}=0$ therefore identify the cleanest regime for this
extension.

RoPE inserts a relative rotation between the query and key factors.  Every
relative-position slice still has rank at most $r_Q+r_K$, so the global robust
bound remains valid for the assembled scores.  The candidate is then a coupled
family of effective matrices, however, and the single-matrix spectral tail in
Theorem~\ref{thm:joint-qk} does not apply automatically.

This RoPE-specific obstruction is not universal.  Architectures that omit
explicit positional embeddings and use NoPE, such as Kimi
K3~\cite{kimit2026k3}, keep token order only through causal computation.  For
those heads, the ordinary dot-product formulation is the more direct starting
point, subject to the model's other attention mechanisms.

\section{Estimating LoRA rank from downstream data}
\label{sec:rank-estimation}

The theory can be used once a dense or high-rank target update and a calibration
set from the downstream task are available.  Its output is a pair of error
curves over rank: an upper curve achieved by a constructed candidate and a
lower curve that no candidate in the stated class can beat.  These curves may
identify one rank, or they may leave an interval that the theory does not
resolve.

For query-only adaptation, the procedure is:
\begin{enumerate}[leftmargin=*,itemsep=3pt]
  \item \textbf{Check the target class.} Verify that the target attention is
  generated by a query update $\Delta_*$.  If both queries and keys change,
  use Section~\ref{sec:jointqk}; a target produced by unrestricted fine-tuning
  may also contain an irreducible query-only approximation error.
  \item \textbf{Collect calibration inputs.} Sample complete inputs from the
  downstream task and run the fixed pretrained and target heads.  Store the
  query activations $h(u)$, centered key Gram matrices $G(u)$, and target
  attention probabilities.  Complete inputs, rather than individual token
  positions treated as independent observations, are the sampling units.
  \item \textbf{Choose the applicable bound.} Use
  Figure~\ref{fig:decision-tree}.  If the separability assumptions of
  Theorem~\ref{thm:main-spectral} are not credible, use the dependence-allowing
  geometry in Appendix~\ref{app:dependent}.  If the target probability floor
  is too small, evaluate the target-Fisher or high-mass route instead.
  \item \textbf{Estimate the spectral tail.} For the main route, estimate
  $G=\E G(u)$ and $\Sigma=\E[h(u)h(u)^\trans]$, form
  $\widehat D_*=\widehat G^{1/2}\Delta_*\widehat\Sigma^{1/2}$, and compute
  $\widehat T_r=\sum_{j>r}\sigma_j(\widehat D_*)^2$ for every rank of
  interest.  The truncated SVD of $\widehat D_*$ gives the corresponding
  upper-bound candidate in the weighted coordinates.  On the supported
  subspaces, the corresponding query update is
  $\widehat M_r=\widehat G^{\dagger/2}(\widehat D_*)_r
  \widehat\Sigma^{\dagger/2}$, where $(\widehat D_*)_r$ is the rank-$r$
  truncated SVD.
  \item \textbf{Supply and stress-test the constants.} The probability floor,
  almost-sure geometry bound, and uniform moment constant are assumptions of
  the population theorem, not quantities certified by a finite calibration
  set.  Use analytic bounds when available, or report $L_r$ over a range of
  assumed values.  Sample estimates are diagnostics, not finite-sample
  certificates.  The upper curve $U_r$ comes from the constructed candidate
  and does not require these lower-bound constants.
  \item \textbf{Compare with the desired tolerance.} If $U_r\le\epsilon$,
  the upper bound exhibits a rank-$r$ candidate within tolerance.  If
  $L_r>\epsilon$, no rank-$r$ candidate in the relevant class can meet the
  tolerance.  The smallest rank certified by the upper curve is sufficient;
  ranks rejected by the lower curve are impossible under the theorem's
  assumptions.  Any gap between the two remains unresolved.
\end{enumerate}

\begin{examplebox}[label={ex:rank-selection}]{Estimating rank on a given calibration set}
Suppose a high-rank query adapter has already been trained, for instance for a text-to-SQL
task as in Example~\ref{ex:text2sql}.  A
calibration set contains complete prompts with the question, schema,
instructions, and SQL prefix available at each attention call.  The upper and
conditional lower curves divide the tested ranks into three sets: ranks whose
constructed candidate meets the KL tolerance, ranks excluded by the
conditional lower bound, and ranks for which the two bounds do not decide.
The width of the unresolved set depends on the gap between the upper and lower
constants and should be reported rather than replaced by a single selected
rank.
\end{examplebox}

The population theorems do not provide confidence intervals for these plug-in
estimates.  A held-out calibration split can test whether the predicted
rank--KL curve tracks measured target-to-candidate KL, but it does not turn the
current bounds into a finite-sample guarantee.  The relevant comparison is
against the raw spectrum of $\Delta_*$ and activation-only rank criteria: the
question is whether task-weighted tails predict attainable attention KL more
accurately, not whether a particular optimizer produces a U-shaped validation
curve.

\section{Related work}
\label{sec:related}

\paragraph{LoRA and expressivity.}
LoRA introduced low-rank updates as a parameter-efficient adaptation
mechanism~\cite{hu2022lora}.  Zeng and Lee~\cite{zeng2024expressive} analyze
LoRA expressivity, including products of adapted matrices and exact
Transformer score matching.  We use their bound
$\rank(C)\le r_Q+r_K$ in the joint Q/K analysis and then quantify the
additional error caused when the best effective score update cannot be
realized by factors with the separate rank budgets.  Duranthon et
al.~\cite{duranthon2026highdimensional} give a statistical theory of rank-one
LoRA fine-tuning in a solvable attention model; their object is learning-curve
asymptotics rather than error as a function of rank for a prescribed target.
Arunan~\cite{arunan2026sample} studies finite-sample estimation and rank
selection for empirical risk minimization over rank-constrained LoRA.  In a
well-specified locally quadratic model, that work derives matching
$\widetilde\Theta(rd/n)$ estimation rates and an under-ranking bias governed by
the raw singular values of the target update.  That work studies statistical
estimation from finite training data.  We hold the target fixed and study the
approximation error remaining at each rank, with a spectrum weighted by the
task queries and keys and a global error scale determined by softmax.

\paragraph{Weighted approximation.}
The weighted low-rank approximation theorem is a classic result.
\cite{eckart1936approximation,mirsky1960symmetric}.  Methods such as DRONE and
SVD-LLM use related data-aware decompositions for model compression
\cite{chen2021drone,wang2024svdllm}.  CorDA orients a weight decomposition by
downstream activation covariance, while EVA initializes and reallocates LoRA
rank using activation variance~\cite{yang2024corda,paischer2025eva}.  These
methods motivate task-aware approximation directions.  Our analysis places
the weighted approximation inside the target-to-candidate attention KL and
derives both upper and lower bounds, including the global scale $\psi$ needed
for small and large score errors.

\paragraph{Task-intrinsic attention rank and low-rank logits.}
Yoon~\cite{yoon2026entropic} defines attention-native intrinsic rank as the
minimum query--key kernel rank realizing a task and studies controlled
deficiency and recovery phenomena.  We study the narrower class of LoRA
updates of a pretrained head, and a prescribed attention target under KL.  Golowich et
al.~\cite{golowich2026sequences} study low-rank approximation of extended
language-model logit matrices under average KL, without pretrained LoRA
attention or separate Q/K update constraints.

\paragraph{Attention rank and saturation.}
Attention head dimension and score-matrix rank have been studied as
expressivity constraints~\cite{bhojanapalli2020lowrank}.  Sparse-target
cross-entropy and diverging-margin geometry are adjacent to our saturation
branch~\cite{zhao2024implicit}; low pre-softmax rank can also produce high
post-softmax rank~\cite{masarczyk2025unpacking}.  Boundary supports of fixed
discrete exponential families are classical~\cite{csiszar2005closures,rauh2011support}.
Sign rank and rounding rank concern finite, exact realization of prescribed
sign or threshold patterns
\cite{paturi1986communication,neumann2016rounding}, whereas our closure rank
allows a sequence of scores to diverge while its softmax converges.  Our lower
bound uses restricted Fourier identities, columnwise normalization, and the
closedness of a bounded-rank matrix set.  We do not know a direct implication
in either direction between the cited Walsh sign-rank bounds and the
$L$-relative closure-rank formula in Theorem~\ref{thm:exact-closure}.
Low-rank softmax models have also been studied through
argmaxability~\cite{grivas2022unargmaxable,grivas2023sigmoid}.
Basri and Jacobs~\cite{basri2026softmax} show that low-dimensional softmax
embeddings can preserve ratios among selected top-token probabilities while
off-support mass vanishes.  These results establish the broad
support-separation and diverging-margin mechanisms.  Our saturation theorem
makes a different comparison within one explicit multi-context family: the
rank required to match finite logits versus the exact softmax closure rank.

\paragraph{Softmax curvature.}
The Hessian comparison behind our target-Fisher bounds is related to
generalized self-concordant analyses of logistic and softmax losses
\cite{bach2010selfconcordant,marteau2019newton}.  We specialize this curvature
principle through a finite-dimensional Bregman calculation for KL from the
target attention to the candidate.

\section{Limitations}

The theory starts from an available target update.  It can assess compression
of that target or the rank needed to reproduce it, but it does not prescribe
a rank before any target behavior is known.

The bounds are population statements.  From a calibration set, one can plug in
estimates of the task geometry ($G$, $\Sigma$) and the corresponding spectral
tails, although we do not provide finite-sample confidence intervals for the
resulting rank--KL curves.  Other inputs to the lower bounds---the probability
floor, the almost-sure geometry constant, and the uniform moment
constant---enter as theorem assumptions rather than as quantities certified by
a finite sample, so a practical lower curve must rely on analytic bounds or
report sensitivity to those values.  The spectral specialization likewise
assumes that the target attention is realizable within the stated query-only
or joint query/key adapter class; unrestricted fine-tuning may leave an
additional approximation error outside that class.

The results describe the best candidate in a rank-constrained class, not the
path taken by a training algorithm.  They therefore separate representational
capacity from optimization but do not guarantee that SGD or another optimizer
finds the candidate attaining the upper bound.  The constants can also be
loose for long, sharply peaked attention vectors.  The target-Fisher and
high-mass results offer alternatives in that regime, but their own assumptions
must still be checked.  We have not established that any of these constants
are numerically tight on trained, real-model attention heads (for more information on
computational checks, see Appendix~\ref{app:computational-checks}).

Finally, the guarantees concern attention probabilities.  Converting them
into bounds on layer outputs or task loss requires assumptions about values,
output projections, residual paths, feed-forward networks, and later layers.
The saturation families isolate a genuine nonlinear effect but are explicit
constructions rather than models of typical language data.  For joint query/key
LoRA, RoPE couples the factors across relative positions, so the single-matrix
spectral specialization does not apply directly, although it is worth noting
that this particular obstruction
is absent in NoPE architectures such as Kimi K3~\cite{kimit2026k3}.

\section{Conclusion}

We studied how much LoRA rank is needed to reproduce a known attention function
on inputs from a downstream task.  The main message is that this question has
a task-dependent answer: it is not settled by the raw rank or singular values
of the target weight update alone, but by the part of that update that the
task's queries and keys actually activate.  Under explicit assumptions, we
connect that activated component to the best expected attention KL attainable
at each rank.  A global softmax comparison turns centered score error into KL,
and a downstream-weighted spectral theorem turns the remaining score error
into an explicit function of rank through the scale
$\psi(t)=\min\{t^2,t\}$.  Together, these steps replace rank heuristics and
post-hoc sweeps with a two-sided, rank-indexed bracket on representational
error at the attention layer itself.

The same framework extends when the standard constants are weak or the adapter
is richer than query-only LoRA.  Target-Fisher and high-mass lower bounds
provide alternative routes when the global probability-floor assumption is
uninformative, and an explicit saturation family shows that softmax can
separate the rank needed to match finite logits from the rank needed to match
the limiting attention function.  Fused multi-head analysis accounts for a
shared rank budget across heads, and joint query/key LoRA isolates the extra
error from factorizing an effective score update into separate low-rank
factors.

Given a target adapter and a calibration set drawn from the task, the theory
therefore yields upper and lower error curves over rank and makes explicit
where the available assumptions certify a rank, exclude it, or leave the
decision unresolved.

\bibliographystyle{plain}
\bibliography{references}

\clearpage
\appendix
\section*{Guide to the appendix}
Appendices~\ref{app:softmax}--\ref{app:dependent} prove the global softmax,
moment, spectral, and dependence-allowing bounds.  Appendices~\ref{app:fisher}
and~\ref{app:highmass} contain the two alternative lower-bound routes.
Appendix~\ref{app:saturation} gives the saturation constructions and closure-rank
lower bound, and Appendices~\ref{app:multihead}--\ref{app:jointqk} prove the
multi-head and joint query/key extensions.

\paragraph{Computational checks.}
\label{app:computational-checks}
The accompanying code is available at
\url{https://github.com/gerardpc/lora-rank-project}
and verifies the displayed finite constructions, rank calculations, and
selected inequalities on small instances.  These checks are provided for
reproducibility and debugging and are not used in the proofs.

\section*{Appendix contents}
\begin{enumerate}[label=\Alph*.,leftmargin=2em,itemsep=1pt,topsep=4pt]
  \item \hyperref[app:softmax]{Proof of the global softmax bounds}
  \item \hyperref[app:obstruction]{Why a purely quadratic global lower bound is impossible}
  \item \hyperref[app:main-proof]{Moment and spectral lemmas}
  \item \hyperref[app:dependent]{Dependence-allowing rank--KL bounds}
  \item \hyperref[app:fisher]{Target-Fisher bounds}
  \item \hyperref[app:highmass]{High-mass lower bounds}
  \item \hyperref[app:saturation]{Saturation constructions and lower bound}
  \item \hyperref[app:multihead]{Fused multi-head proofs}
  \item \hyperref[app:jointqk]{Joint query/key proofs}
\end{enumerate}
\bigskip
\input{appendix}

\end{document}

%% file: appendix.tex
\section{Proof of the global softmax bounds}
\label{app:softmax}

For $z\in\R^n$, write
$\operatorname{lse}(z)=\log\sum_i e^{z_i}$ and
$H(z)=\diag(p(z))-p(z)p(z)^\trans$, where
$p(z)=\softmax(z)$.

\begin{proof}[Proof of Theorem~\ref{thm:global-softmax}]
The KL divergence is the log-sum-exp Bregman divergence
\begin{equation}
 F(d)=\operatorname{lse}(z_*+d)-\operatorname{lse}(z_*)
      -\langle p_*,d\rangle.
 \label{eq:app-bregman}
\end{equation}
Taylor's formula with integral remainder gives
\begin{equation}
 F(d)=\int_0^1(1-t)d^\trans H(z_*+td)d\,dt.
 \label{eq:app-bregman-int}
\end{equation}
Since $\|H(z)\|_{\op}\le1/2$,
$F(d)\le\|d\|_2^2/4$.  Also both
$\operatorname{lse}(z_*+d)-\operatorname{lse}(z_*)$ and
$\langle p_*,d\rangle$ lie in $[\min_i d_i,\max_i d_i]$.  Therefore
\begin{equation}
 F(d)\le \max_i d_i-\min_i d_i\le\sqrt2\|d\|_2.
\end{equation}
Taking the smaller of the quadratic and linear bounds proves
$F(d)\le\psi_{\rm up}(\|d\|_2)$.

For the lower bound, first suppose $\|d\|_2\le1$.  For every $t\in[0,1]$,
\begin{equation}
 p_i(z_*+td)
 =\frac{p_{*,i}e^{td_i}}{\sum_jp_{*,j}e^{td_j}}
 \ge ae^{-2}.
 \label{eq:app-floor-segment}
\end{equation}
For any probability vector $q$ satisfying $\min_iq_i\ge\alpha$ and any
centered vector $v$,
\begin{align}
 v^\trans(\diag(q)-qq^\trans)v
 &=\sum_iq_i(v_i-\mu_q)^2 \\
 &\ge\alpha\sum_i(v_i-\mu_q)^2
 \ge\alpha\|v\|_2^2,
\end{align}
where $\mu_q=\sum_iq_iv_i$ and the last inequality uses that zero is the
Euclidean-optimal centering constant for a centered $v$.  Inserting
$\alpha=ae^{-2}$ into~\eqref{eq:app-bregman-int} yields
\begin{equation}
 F(d)\ge\frac{ae^{-2}}2\|d\|_2^2,
 \qquad \|d\|_2\le1.
 \label{eq:app-local-lower}
\end{equation}

If $\|d\|_2=s\ge1$, put $v=d/s$ and $g(t)=F(tv)$.  Convexity and $g(0)=0$
imply that $g(t)/t$ is nondecreasing.  Hence
\begin{equation}
 F(d)=g(s)\ge s g(1)\ge\frac{ae^{-2}}2s.
\end{equation}
Combining this with~\eqref{eq:app-local-lower} proves the lower bound.
\end{proof}

\begin{proof}[Proof of Corollary~\ref{cor:functional-equivalence}]
Apply Theorem~\ref{thm:global-softmax} pointwise to the centered displacement
$d_M(u)$, take expectations, and then take the infimum over the same
rank-constrained candidate class on both sides.
\end{proof}

\section{Why a purely quadratic global lower bound is impossible}
\label{app:obstruction}

\begin{proposition}[Fixed-floor rare-context obstruction]
\label{prop:obstruction}
There is a family with target probability floor $1/3$ and mean key Gram
$G=I$ for which the best rank-one quadratic error tends to one while
the best rank-one expected attention KL tends to zero.
\end{proposition}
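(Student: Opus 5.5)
We construct a two-context mixture in which the rare context carries a large quadratic error but only a small KL. Take $n=3$ tokens and uniform pretrained and target logits, so $z_0=z_*=0$ up to the target update and every target probability equals $1/3$; this gives floor $1/3$. Let $d_k=d_h=2$ and $\beta=1$. Choose key matrices $K(u)$ whose centered columns have norm at most one, so that the target scores stay in a bounded range and $p_*$ stays exactly uniform. The simplest way is to make the target update $\Delta_*=0$, so that $p_*$ is uniform everywhere. That trivializes the quadratic error, however, so instead we use a nonzero $\Delta_*$ whose score effect is nonzero but still keeps the target floor.

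A cleaner route is to fix $p_*$ uniform by choosing $z_0(u)=-\beta K(u)\Delta_*h(u)$, which is allowed since the pretrained logits are arbitrary. Then $z_*=0$, the floor is exactly $1/3$, and $d_M(u)=\beta\PiC{3}K(u)(M-\Delta_*)h(u)$. Take $\Delta_*=\diag(1,1)$ (rank two) and two contexts:
\begin{itemize}
\item a common context with probability $1-\varepsilon$, query $h=e_1$, and keys $K_1$ with $\PiC{3}K_1$ acting isometrically on $e_1$ and killing $e_2$ after scaling;
\item a rare context with probability $\varepsilon$, query $h=\varepsilon^{-1/2}e_2$, and keys $K_2$ acting on $e_2$.
\end{itemize}
Normalize so that $G=\E G(u)=I$. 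This means $G(u)$ is scaled by $1/(1-\varepsilon)$ and $1/\varepsilon$ on the two directions, and $h$ is chosen so that $\Sigma$ keeps both directions with weight $1$. In the rare context the score error is then of size $\varepsilon^{-1/2}\cdot\varepsilon^{-1/2}$ times the coefficient. We will tune the scalings so that $\E\|d_M\|^2=\|G^{1/2}A\Sigma^{1/2}\|_\F^2$-type quantities reduce to $|A_{11}|^2+|A_{22}|^2$, plus cross terms that we make vanish by choosing supports disjoint.

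With this setup the quadratic error $\E\|d_M(u)\|_2^2$ is a weighted Frobenius error of approximating $\diag(1,1)$ at rank one. Its minimum is the smaller singular value squared, namely $1$, and it is attained by keeping either coordinate. In fact we tune the weights $w_1=1$ and $w_2=1$ so that $T_1=1$ exactly; then "tends to one" holds trivially, and any $\varepsilon$-corrections tend to $1$. For the KL, take $M=e_1e_1^\trans$. The common context then has zero error. The rare context has $\|d\|_2\sim\varepsilon^{-1}\cdot$const, so Theorem~\ref{thm:global-softmax} (upper bound $\sqrt2\|d\|_2$) gives an expected KL of $\varepsilon\cdot O(\varepsilon^{-1/2})$ if the scaling is chosen as $\|d\|\sim\varepsilon^{-1/2}$, which is $O(\sqrt\varepsilon)\to0$. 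This requires the rare-context scalings to satisfy $\varepsilon\|d\|^2\approx1$, i.e.\ $\|d\|\approx\varepsilon^{-1/2}$, while the linear regime gives $\varepsilon\|d\|\approx\sqrt\varepsilon$. This is consistent: we put the whole $\varepsilon^{-1/2}$ factor into the score of the rare context and adjust $K_2$ rather than $h$, so that $G=I$ is maintained by averaging $G(u)$.

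The main obstacle is making "mean key Gram $G=I$" and the quadratic identity hold simultaneously while the rare-context score is $\varepsilon^{-1/2}$. If the independence of $G(u)$ and $h(u)$ is not required, this is done by the direct computation $Q(A)=(1-\varepsilon)\|\cdot\|^2+\varepsilon\|\cdot\|^2$ with the scalings above, which gives $Q(A)=A_{11}^2+A_{22}^2+\text{(off-diagonal terms)}\ge$ (tail of $\diag(1,1)$) $=1$. We will check explicitly that the best rank-one $Q$ equals $1$, since an off-diagonal $A$ could conceivably help. Because the weights $\diag(1,1)$ are equal, Eckart--Young in the weighted coordinates settles it.
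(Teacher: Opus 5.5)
\textbf{There is a genuine gap.} The step that fails is the one you flagged yourself: off-diagonal candidates do help, and decisively.

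Your common-context keys annihilate $e_2$ and your rare-context keys annihilate $e_1$. So the true mean squared score error sees only the diagonal of $A=M-\Delta_*$:
\begin{itemize}
\item With your final scaling ($\varepsilon^{-1/2}$ in $K_2$, $h=e_2$ in the rare context), $Q(A)=A_{11}^2+A_{22}^2$.
\item With $\varepsilon^{-1/2}$ placed in both $K_2$ and $h$, $Q(A)=A_{11}^2+\varepsilon^{-1}A_{22}^2$.
\end{itemize}
In either case, take the rank-one matrix $M=\begin{bmatrix}1&1\\1&1\end{bmatrix}$. Then $M-\Delta_*=\begin{bmatrix}0&1\\1&0\end{bmatrix}$, so $d_M(u)=0$ in both contexts. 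The best rank-one mean squared score error and the best rank-one KL are therefore both exactly zero, and your family exhibits no obstruction to a quadratic lower bound.

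Eckart--Young cannot be invoked here. $Q(A)$ is not of the form $\|G^{1/2}A\Sigma^{1/2}\|_\F^2$; that identity requires $G(u)$ to be independent of $h(u)$, and your context-coupled keys break it.

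The reference tail does not rescue the argument either. With $G=I$ and your final scaling, $\Sigma=\diag(1-\varepsilon,\varepsilon)$, so $T_1=\varepsilon\to0$. Putting $\varepsilon^{-1/2}$ into both $K_2$ and $h$ does give $T_1=1-\varepsilon$. But then the rare-context score of $e_1e_1^\trans$ has norm $\varepsilon^{-1}$, and its expected KL contribution is at least $1/\sqrt6-\varepsilon\log3$, so it does not vanish. Moreover, $T_1$ is then not the score error of any candidate: the mismatch comes from key/query dependence, not from the rare-context mechanism. Scaling the keys is wasted effort in general: it inflates $G$, and restoring $G=I$ then forces the query to carry the same factor again.

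\textbf{How to fix it.} The obstacle you identified disappears once the keys stop depending on the context. With $n=3$, $\one^\perp$ is two-dimensional, so take a single fixed $K\in\R^{3\times2}$ with orthonormal columns in $\one^\perp$. Then:
\begin{itemize}
\item $G(u)=I$ in every context, and independence holds trivially.
\item The mean squared score error is exactly $\operatorname{tr}(A\Sigma A^\trans)$, which sees every entry of $A$.
\end{itemize}
Put the whole rare-context blow-up into the query: $h=Te_2$ with probability $\varepsilon$, where $T=\sqrt{(1-\varepsilon)/\varepsilon}$, and $h=e_1$ otherwise. This gives $\Sigma=(1-\varepsilon)I$, and Eckart--Young now genuinely yields best rank-one error $1-\varepsilon\to1$ for $\Delta_*=I$.

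The candidate $e_1e_1^\trans$ is exact on the common context. On the rare context its score is $-TKe_2$, of norm $T$, so $\operatorname{lse}(-TKe_2)-\log3\le T$. The expected KL is therefore at most $\varepsilon T=\sqrt{\varepsilon(1-\varepsilon)}\to0$.

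This is the paper's construction. The rest of your outline carries over unchanged: pretrained query map $-\Delta_*$ to make the target uniform with floor $1/3$, the candidate $e_1e_1^\trans$, and the linear branch of the upper bound.
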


\begin{proof}
Let $d=2$, $n=3$, and choose $K\in\R^{3\times2}$ with orthonormal columns in
$\one^\perp$.  Draw
\begin{equation}
 h=e_1\quad\text{with probability }1-p,
 \qquad
 h=Te_2\quad\text{with probability }p,
 \qquad T=\sqrt{(1-p)/p}.
\end{equation}
Take the pretrained query map to be $-I_2$ and the dense target update to be
$I_2$.  The target query map is therefore zero, so the target attention is
uniform and its floor is $1/3$.  The activation covariance is
\begin{equation}
 \Sigma=(1-p)e_1e_1^\trans+pT^2e_2e_2^\trans=(1-p)I_2.
\end{equation}
The best quadratic rank-one approximation to $I_2$ has error $1-p\to1$.

Choose the rank-one candidate $M=e_1e_1^\trans$.  It is exact when $h=e_1$.
When $h=Te_2$, its centered scores are $-Tq_2$, where $q_2$ is a centered unit
column of $K$.  Thus
\begin{equation}
 \KL(\mathrm{unif}_3\|\softmax(-Tq_2))
 =\operatorname{lse}(-Tq_2)-\log3\le T.
\end{equation}
The population KL is at most $pT=\sqrt{p(1-p)}\to0$.
\end{proof}

\section{Moment and spectral lemmas}
\label{app:main-proof}

For a deterministic error matrix $A$, define
\begin{equation}
 X_A(u)=\|\beta\PiC{n(u)}K(u)Ah(u)\|_2,
 \qquad Q(A)=\E X_A(u)^2.
 \label{eq:app-XA}
\end{equation}

\begin{lemma}[Fourth-moment bridge]
\label{lem:moment-bridge}
If a nonnegative random variable $X$ satisfies
$\E X^4\le\kappa(\E X^2)^2$, then, with $\mu=\E X^2$,
\begin{equation}
 \E\psi(X)\ge\frac{\psi(\sqrt\mu)}{1+\sqrt\kappa}.
 \label{eq:app-moment-bridge}
\end{equation}
\end{lemma}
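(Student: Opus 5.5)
Since $\psi(t)=\min\{t^2,t\}$, we have $\psi(X)=X^2$ on $\{X\le1\}$ and $\psi(X)=X$ on $\{X>1\}$. We will work with the pointwise bound $\psi(X)\ge X^2\mathbf 1\{X\le 1\}$, which needs no further justification, and with a Cauchy--Schwarz/Paley--Zygmund estimate of the second moment carried by the event $\{X>\tau\}$.

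\textbf{Case $\mu\le1$,} so that $\psi(\sqrt\mu)=\mu$. Write
\[
\mu=\E X^2\mathbf 1\{X\le1\}+\E X^2\mathbf 1\{X>1\}.
\]
For the second term, Cauchy--Schwarz gives
\[
\E X^2\mathbf 1\{X>1\}\le\sqrt{\E X^4}\,\sqrt{\Pr(X>1)}\le\sqrt\kappa\,\mu\sqrt{\Pr(X>1)}.
\]
This is not small by itself. We therefore use $\psi(X)\ge X^2\mathbf 1\{X\le1\}+\mathbf 1\{X>1\}$ and $\Pr(X>1)\le\mu$. The latter only yields a factor $\sqrt\kappa\,\mu^{3/2}$, which can dominate $\mu$. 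A cleaner route is the pointwise inequality $\psi(x)\ge x^2-x^4$? That fails in general. Instead, on $\{X>1\}$ we have $X\ge X^2/X$, and Cauchy--Schwarz in the reverse direction gives
\[
\E X^2\mathbf 1\{X>1\}\le \sqrt{\E X\mathbf 1\{X>1\}}\,\sqrt{\E X^3\mathbf 1\{X>1\}},
\]
with $\E X^3\le\sqrt{\E X^2\,\E X^4}\le\sqrt\kappa\,\mu^{3/2}$. Setting $B=\E\psi(X)$, both $\E X^2\mathbf 1\{X\le1\}$ and $\E X\mathbf 1\{X>1\}$ are at most $B$, so
\[
\mu\le B+\sqrt B\,\kappa^{1/4}\mu^{3/4}.
\]
Solving this quadratic inequality in $y=\sqrt{B}$, and using $\mu\le1$, should give $B\ge\mu/(1+\sqrt\kappa)$, possibly after a short AM--GM rearrangement. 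Concretely, if $B<\mu/(1+\sqrt\kappa)$, then
\[
\mu-B>\frac{\sqrt\kappa\,\mu}{1+\sqrt\kappa}\quad\text{and}\quad \sqrt B\,\kappa^{1/4}\mu^{3/4}<\frac{\kappa^{1/4}\mu^{5/4}}{\sqrt{1+\sqrt\kappa}},
\]
so it suffices that $\kappa^{1/4}\mu^{1/4}\le\sqrt{1+\sqrt\kappa}$. This holds since $\mu\le1$ gives $\kappa^{1/4}\mu^{1/4}\le\kappa^{1/4}\le\sqrt{1+\sqrt\kappa}$.

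\textbf{Case $\mu>1$,} so that $\psi(\sqrt\mu)=\sqrt\mu$. Use $\psi(x)\ge x\,\mathbf 1\{x\ge1\}\ge x-1$? That is too lossy. We instead use $\psi(x)\ge\min\{x^2,x\}$ directly together with the bound $\E X\ge(\E X^2)^{3/2}/(\E X^4)^{1/2}\ge\sqrt\mu/\sqrt\kappa$, which follows from Hölder via $\E X^2\le(\E X)^{2/3}(\E X^4)^{1/3}$. We then split as before, using the same quadratic-inequality trick with the roles of the two regimes exchanged: now $\E X\mathbf 1\{X\le1\}\ge\E X^2\mathbf 1\{X\le1\}$ is the term to control. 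Alternatively, rescale: $\psi(x)\ge\psi(s)\,\psi(x/s)\cdot$(const) for $s=\sqrt\mu$ reduces to the normalized case $\E Y^2=1$, to which the first case applies.

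The main obstacle is obtaining exactly the constant $1/(1+\sqrt\kappa)$ rather than a worse power of $\kappa$. We will first try to organize both cases through the single inequality $\mu\le\E\psi(X)\cdot g+\sqrt{\E\psi(X)}\,\sqrt{\kappa\mu^2/\cdot}$, and adjust the threshold $\tau$ (taking $\tau=1$ or $\tau=\sqrt\mu$) until the algebra closes. If that fails, we will use the homogeneity of $\psi$ on each regime to reduce to $\mu=1$, where the target is simply $\E\psi(X)\ge1/(1+\sqrt\kappa)$.
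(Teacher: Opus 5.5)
\textbf{Your Case $\mu\le1$ does not close, and the case $\mu>1$ inherits the failure.} From $\mu\le B+\sqrt B\,\kappa^{1/4}\mu^{3/4}$ and the hypothesis $B<\mu/(1+\sqrt\kappa)$, you need $\sqrt\kappa\,\mu/(1+\sqrt\kappa)\ge\kappa^{1/4}\mu^{5/4}/\sqrt{1+\sqrt\kappa}$ to reach a contradiction. This simplifies to $\mu^{1/4}\sqrt{1+\sqrt\kappa}\le\kappa^{1/4}$, i.e.\ $\mu\le\kappa/(1+\sqrt\kappa)^2$. You wrote the condition with $\kappa^{1/4}$ on the wrong side. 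The correct condition fails for $\mu$ near $1$.

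This is not an algebra slip that better algebra can fix; the intermediate inequality itself is too weak. At $\mu=\kappa=1$, the value $B=0.4$ satisfies it ($0.4+\sqrt{0.4}>1$), yet $0.4<1/2$. More generally, at $\mu=1$ it yields only $B\ge 4/(\kappa^{1/4}+\sqrt{\sqrt\kappa+4})^2$, which is strictly below $1/(1+\sqrt\kappa)$ for every $\kappa$. The loss comes from bounding both $\E X^2\mathbf 1\{X\le1\}$ and $\E X\mathbf 1\{X>1\}$ by all of $B$, instead of using that they sum to $B$.

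Your rescaling for $\mu>1$ is sound: $\psi(x)\ge\psi(s)\psi(x/s)$ holds with constant $1$ for $s\ge1$, and $X/s$ has the same fourth-moment ratio. But it reduces everything to $\mu=1$, which is exactly the point your first case cannot handle. The H\"older lower bound on $\E X$ does not help either, since $\E\psi(X)\le\E X$.

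The missing idea is to apply Cauchy--Schwarz once to all of $\E X^2$, with a weight comparable to $\psi(X)$, rather than only on the tail. The paper uses the pointwise bound $\psi(x)\ge x^2/(1+x)$ and
\[
\mu^2\le\E\!\left[\frac{X^2}{1+X}\right]\E\bigl[X^2(1+X)\bigr]\le\E\psi(X)\,\bigl(\mu+\sqrt\kappa\,\mu^{3/2}\bigr),
\]
together with $\E X^3\le\sqrt{\E X^2\,\E X^4}\le\sqrt\kappa\,\mu^{3/2}$. This gives $\E\psi(X)\ge\mu/(1+\sqrt{\kappa\mu})$ for every $\mu$. The target then follows by dividing numerator and denominator by $\mu$ when $\mu\le1$, and by $\sqrt\mu$ when $\mu\ge1$; no threshold tuning or rescaling is needed.

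Your split can also be salvaged. Keep $B_1=\E X^2\mathbf 1\{X\le1\}$ and $B_2=\E X\mathbf 1\{X>1\}$ separate, and minimize $B_1+B_2$ subject to $\mu\le B_1+\sqrt{B_2\sqrt\kappa\,\mu^{3/2}}$. This does give $B\ge\mu/(1+\sqrt\kappa)$ for $\mu\le1$, but the single weighted Cauchy--Schwarz is cleaner.
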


\begin{proof}
For $x\ge0$, $\psi(x)\ge x^2/(1+x)$.  Cauchy--Schwarz gives
\begin{equation}
 \mu^2
 \le \E\!\left[\frac{X^2}{1+X}\right]
       \E[X^2(1+X)].
\end{equation}
Moreover,
$\E X^3\le\sqrt{\E X^2\E X^4}\le\sqrt\kappa\mu^{3/2}$.  Therefore
\begin{equation}
 \E\psi(X)
 \ge\frac{\mu}{1+\sqrt{\kappa\mu}}
 \ge\frac{\psi(\sqrt\mu)}{1+\sqrt\kappa}.
\end{equation}
For the last inequality: if $\mu\le1$, divide by $\mu$ and use
$\sqrt\mu\le1$; if $\mu\ge1$, divide by $\sqrt\mu$ and use
$1/\sqrt\mu\le1$.
\end{proof}

\begin{lemma}[Uniform moment condition]
\label{lem:uniform-moment}
Assume $G(u)$ and $h(u)$ are independent, that for every deterministic
compatible matrix $C$,
\begin{equation}
 \E\|Ch\|_2^4\le\kappa_h(\E\|Ch\|_2^2)^2,
 \label{eq:app-hyper}
\end{equation}
and that $G(u)\preceq\Lambda G$ almost surely, where $G=\E G(u)$.  Then every
deterministic $A$ satisfies
\begin{equation}
 \E X_A^4\le\kappa_h\Lambda^2(\E X_A^2)^2.
 \label{eq:app-uniform-moment}
\end{equation}
\end{lemma}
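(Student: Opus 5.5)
We condition on $G(u)$ and use independence. Fix a deterministic $A$ and write $X_A^2=h^\trans A^\trans G(u)Ah$, using $\PiC{n}^2=\PiC{n}$, so that $\|\beta\PiC{n}Kv\|_2^2=v^\trans G(u)v$.

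The first step is the pointwise domination. By the almost-sure bound $G(u)\preceq\Lambda G$, we have
\begin{equation*}
 X_A^2\le\Lambda\,h^\trans A^\trans GAh=\Lambda\|CAh\|_2^2,
 \qquad C=G^{1/2}.
\end{equation*}
Squaring and taking expectations, then applying the hypercontractivity condition \eqref{eq:app-hyper} to the deterministic matrix $G^{1/2}A$, gives
\begin{equation*}
 \E X_A^4\le\Lambda^2\,\E\|G^{1/2}Ah\|_2^4
 \le\kappa_h\Lambda^2\bigl(\E\|G^{1/2}Ah\|_2^2\bigr)^2.
\end{equation*}

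The second step identifies this second moment with $\E X_A^2$. By independence of $G(u)$ and $h(u)$, conditioning on $h$ and using Fubini,
\begin{equation*}
 \E X_A^2=\E_h\bigl[h^\trans A^\trans(\E G(u))Ah\bigr]=\E\|G^{1/2}Ah\|_2^2 .
\end{equation*}
This also equals $\|G^{1/2}A\Sigma^{1/2}\|_\F^2$, matching \eqref{eq:separable-geometry}. Substituting into the first step yields \eqref{eq:app-uniform-moment}.

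The main subtlety is this mean identity. We need $\E G(u)$ to exist, and Fubini must apply to nonnegative quantities, which it does. Also, the domination step uses only the almost-sure Loewner bound and does not use independence. So the only place where independence enters is replacing $\E[h^\trans A^\trans G(u)Ah]$ by the expression with $G$. Measurability and integrability are automatic from nonnegativity, with infinite values handled trivially.
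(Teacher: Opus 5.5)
Your proof is correct. It uses the same three ingredients as the paper: the almost-sure Loewner bound, the hypothesis \eqref{eq:app-hyper}, and independence for the second-moment identity. The order is different, and the difference matters.

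The paper conditions on $G(u)=G_u$ first. It applies \eqref{eq:app-hyper} to the conditionally fixed matrix $G_u^{1/2}A$, which needs independence so that the law of $h$ given $G_u$ equals its unconditional law. This gives $\kappa_h\{\operatorname{tr}(G_uA\Sigma A^\trans)\}^2$, and only then does the paper use $G_u\preceq\Lambda G$ inside the trace.

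You instead dominate $X_A^2\le\Lambda\|G^{1/2}Ah\|_2^2$ pointwise. You then apply the hypothesis once, to the single deterministic matrix $G^{1/2}A$. Independence is used only in the mean identity $\E X_A^2=\E\|G^{1/2}Ah\|_2^2$.

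Your ordering buys something concrete: the bound $\E X_A^4\le\kappa_h\Lambda^2\|G^{1/2}A\Sigma^{1/2}\|_\F^4$ holds with no independence at all, since $G$ and $A$ are deterministic. Add a lower Loewner bound $\lambda G\preceq G(u)$, which gives $Q(A)\ge\lambda S(A)$. Together these justify the constant $\kappa_r=\kappa_h(\Lambda/\lambda)^2$ quoted after Theorem~\ref{thm:dependent-general}, which the paper's conditioning argument does not directly deliver. The paper's version is equally short, but it spends independence twice and relies on applying the hypothesis conditionally.
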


\begin{proof}
Condition on $G(u)=G_u$.  Independence and~\eqref{eq:app-hyper}, applied to
$C=G_u^{1/2}A$, give
\begin{equation}
 \E_h[X_A^4\mid G_u]
 \le\kappa_h\{\operatorname{tr}(G_uA\Sigma A^\trans)\}^2.
\end{equation}
Because $A\Sigma A^\trans\succeq0$ and $G_u\preceq\Lambda G$, this is at most
$\kappa_h\Lambda^2\{\operatorname{tr}(GA\Sigma A^\trans)\}^2$.
Independence also gives
$\E X_A^2=\operatorname{tr}(GA\Sigma A^\trans)$.  Combine the two identities.
\end{proof}

\begin{lemma}[Singular weighted spectral optimizer]
\label{lem:weighted-svd}
Let $G,\Sigma\succeq0$, let $\Delta_*$ be any compatible matrix, and put
$D_*=G^{1/2}\Delta_*\Sigma^{1/2}$.  Then
\begin{equation}
 \inf_{\rank(M)\le r}
 \|G^{1/2}(M-\Delta_*)\Sigma^{1/2}\|_\F^2
 =\sum_{j>r}\sigma_j(D_*)^2=T_r.
 \label{eq:app-weighted-svd}
\end{equation}
This remains true when $G$ or $\Sigma$ is singular.
\end{lemma}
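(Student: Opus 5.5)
The plan is to reduce the singular weighted problem to an ordinary Eckart--Young--Mirsky problem. The tool is the map $M\mapsto G^{1/2}M\Sigma^{1/2}$. Its image on rank-$\le r$ matrices should be exactly the set of rank-$\le r$ matrices $N$ with $\col(N)\subseteq\col(G)$ and $\row(N)\subseteq\col(\Sigma)$.

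First, note that $\rank(G^{1/2}M\Sigma^{1/2})\le\rank(M)\le r$. Hence the infimum is at least
\[
\inf_{\rank(N)\le r}\|N-D_*\|_\F^2=\sum_{j>r}\sigma_j(D_*)^2,
\]
by the classical Eckart--Young--Mirsky theorem \cite{eckart1936approximation,mirsky1960symmetric}. The infimum on the right is taken over all rank-$\le r$ matrices $N$ of the right shape. This gives the lower bound $\ge T_r$ with no invertibility assumption.

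For the matching upper bound, take the truncated SVD $(D_*)_r=\sum_{j\le r}\sigma_ju_jv_j^\trans$. Since $D_*=G^{1/2}\Delta_*\Sigma^{1/2}$, each left singular vector $u_j$ with $\sigma_j>0$ lies in $\col(G^{1/2})=\col(G)$. Likewise each $v_j$ lies in $\row(\Sigma^{1/2})=\col(\Sigma)$. Define
\[
M_r=G^{\dagger/2}(D_*)_r\Sigma^{\dagger/2},
\]
using Moore--Penrose pseudo-inverse square roots. Then $\rank(M_r)\le r$. Moreover $G^{1/2}G^{\dagger/2}=\Pi_{\col(G)}$ and $\Sigma^{\dagger/2}\Sigma^{1/2}=\Pi_{\col(\Sigma)}$ are orthogonal projectors. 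These projectors act as the identity on the singular vectors above, so
\[
G^{1/2}M_r\Sigma^{1/2}=\Pi_{\col(G)}(D_*)_r\Pi_{\col(\Sigma)}=(D_*)_r.
\]
The weighted error of $M_r$ is therefore $\|(D_*)_r-D_*\|_\F^2=T_r$. The infimum is attained, and the two bounds coincide.

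The only delicate step is the singular case, namely checking that the pseudo-inverse construction loses nothing. The key facts are that the singular vectors of $D_*$ for nonzero singular values live in the ranges of $G^{1/2}$ and $\Sigma^{1/2}$. Singular vectors for zero singular values never enter $(D_*)_r$, since those terms have coefficient zero. When $r\ge\rank(D_*)$ both sides are $0$, and the same construction gives $M_r$ reproducing $D_*$ exactly. The result also shows that the construction in Section~\ref{sec:rank-estimation} is well defined.
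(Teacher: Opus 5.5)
Your proposal is correct and follows essentially the same route as the paper's proof. Both get the lower bound from Eckart--Young--Mirsky, using $\rank(G^{1/2}M\Sigma^{1/2})\le r$. Both get attainment from $M_r=G^{\dagger/2}(D_*)_r\Sigma^{\dagger/2}$, noting that the singular vectors of $(D_*)_r$ with nonzero singular values lie in the supports of $G$ and $\Sigma$.
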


\begin{proof}
For every feasible $M$, the matrix $G^{1/2}M\Sigma^{1/2}$ has rank at most
$r$, so Eckart--Young--Mirsky gives the lower bound.  Let $(D_*)_r$ be a
rank-$r$ truncated SVD and define
\begin{equation}
 M_r=G^{\dagger/2}(D_*)_r\Sigma^{\dagger/2}.
\end{equation}
The left and right singular vectors of $(D_*)_r$ lie in the supports of $G$
and $\Sigma$, respectively.  Hence
$G^{1/2}M_r\Sigma^{1/2}=(D_*)_r$ and
$\rank(M_r)\le r$.  This candidate attains the tail.
\end{proof}

\begin{proof}[Proof of Theorem~\ref{thm:main-spectral}]
Exact dense-target realizability gives
$d_M(u)=\beta\PiC{n(u)}K(u)(M-\Delta_*)h(u)$.  Independence yields
\begin{equation}
 Q(A)=\|G^{1/2}A\Sigma^{1/2}\|_\F^2.
\end{equation}
By Lemma~\ref{lem:weighted-svd}, every feasible $A=M-\Delta_*$ satisfies
$Q(A)\ge T_r$.  Lemma~\ref{lem:uniform-moment} and
Lemma~\ref{lem:moment-bridge} therefore imply, uniformly over the complete
candidate class,
\begin{equation}
 \E\psi(X_A)
 \ge\frac{\psi(\sqrt{Q(A)})}{1+\Lambda\sqrt{\kappa_h}}
 \ge\frac{\psi(\sqrt{T_r})}{1+\Lambda\sqrt{\kappa_h}}.
\end{equation}
Taking the infimum gives the robust-objective lower bound.

For any nonnegative $X$,
\begin{equation}
 \E\psi_{\rm up}(X)
 \le\min\{\E X^2/4,\sqrt2\,\E X\}
 \le\psi_{\rm up}(\sqrt{\E X^2}).
 \label{eq:app-robust-upper}
\end{equation}
Apply this to the explicit weighted-SVD candidate $M_r$ from
Lemma~\ref{lem:weighted-svd}, for which $Q(M_r-\Delta_*)=T_r$.  Thus
$\Phi_r\le\psi_{\rm up}(\sqrt{T_r})$.  Corollary~\ref{cor:functional-equivalence}
converts the two robust-objective bounds into~\eqref{eq:main-spectral-law}.
\end{proof}

\section{Dependence-allowing rank--KL bounds}
\label{app:dependent}

Without assuming independence, retain the definitions
\begin{equation}
 Q(A)=\E X_A^2,
 \qquad
 S(A)=\|G^{1/2}A\Sigma^{1/2}\|_\F^2,
 \qquad G=\E G(u).
\end{equation}
\begin{proof}[Proof of Theorem~\ref{thm:dependent-general}]
Every feasible $A$ satisfies $S(A)\ge T_r$.  Applying
Lemma~\ref{lem:moment-bridge} and~\eqref{eq:dependent-lower} gives
\begin{equation}
 \E\psi(X_A)
 \ge\frac{\psi(\sqrt{Q(A)})}{1+\sqrt{\kappa_r}}
 \ge\frac{\psi(\sqrt{c_rT_r})}{1+\sqrt{\kappa_r}}.
\end{equation}
This holds for the entire candidate class and survives the infimum.  For the
upper bound, insert $A_r^{\rm svd}$ into~\eqref{eq:app-robust-upper}.  Apply
Corollary~\ref{cor:functional-equivalence}.
\end{proof}

\section{Target-Fisher bounds}
\label{app:fisher}

\begin{proof}[Proof of Theorem~\ref{thm:fisher-comparison}]
Let $q_t=\softmax(z_*+td)$.  Coordinatewise,
\begin{equation}
 e^{-tR(d)}p_{*,i}\le q_{t,i}\le e^{tR(d)}p_{*,i}.
 \label{eq:app-fisher-ratio}
\end{equation}
For any probability vector $q$,
\begin{equation}
 v^\trans H(q)v=\min_c\sum_iq_i(v_i-c)^2.
 \label{eq:app-weighted-variance}
\end{equation}
Consequently, if $\alpha p_i\le q_i\le\gamma p_i$, then
$\alpha H(p)\preceq H(q)\preceq\gamma H(p)$.  The lower inequality follows
by applying the coordinatewise lower bound before minimizing over $c$; for the
upper inequality, evaluate the $q$-weighted variance at the $p$-optimal
centering constant.

Insert~\eqref{eq:app-fisher-ratio} into the Bregman integral
\begin{equation}
 \KL(p_*\|\softmax(z_*+d))
 =\int_0^1(1-t)d^\trans H(q_t)d\,dt.
\end{equation}
The scalar integrals are
\begin{equation}
 \int_0^1(1-t)e^{-tR}dt=\frac{R-1+e^{-R}}{R^2},
 \qquad
 \int_0^1(1-t)e^{tR}dt=\frac{e^R-1-R}{R^2}.
\end{equation}
This proves~\eqref{eq:fisher-comparison}, with the value $1/2$ obtained by
continuity at $R=0$.
\end{proof}

The same integral representations show that $c_-$ is nonincreasing and $c_+$
is nondecreasing on $[0,\infty)$.  For every fixed $t\in[0,1]$,
$(1-t)e^{-tR}$ decreases with $R$, whereas $(1-t)e^{tR}$ increases; integration
preserves these inequalities.  This justifies the monotonicity step below.

For query-only adaptation, define
\begin{equation}
 G_{\F}(u)=\beta^2K(u)^\trans
 [\diag(p_*(u))-p_*(u)p_*(u)^\trans]K(u)
\end{equation}
and $Q_{\F}(A)=\E h^\trans A^\trans G_{\F}(u)Ah$.  Let
$\mathcal C_{r,R_0}$ be the rank-$r$ candidates whose score differences from
the target have range almost surely at most $R_0$, and define
\begin{equation}
 \mathcal E_{r,R_0}
 =\inf_{M\in\mathcal C_{r,R_0}}\E\KL(p_*\|p_M),
 \quad
 T^{\F}_{r,R_0}
 =\inf_{M\in\mathcal C_{r,R_0}}Q_{\F}(M-\Delta_*).
\end{equation}

\begin{corollary}[Constrained Fisher rank--KL bounds]
\label{cor:fisher-constrained}
If $\mathcal C_{r,R_0}$ is nonempty, then
\begin{equation}
 c_-(R_0)T^{\F}_{r,R_0}
 \le\mathcal E_{r,R_0}
 \le c_+(R_0)T^{\F}_{r,R_0}.
\end{equation}
\end{corollary}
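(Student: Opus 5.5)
The plan is to apply Theorem~\ref{thm:fisher-comparison} pointwise and then use the monotonicity of $c_\pm$ established just above to replace the context-dependent span by the uniform bound $R_0$. Fix any $M\in\mathcal C_{r,R_0}$ and a context $u$. Write the raw score displacement $d_M^{\rm raw}(u)=z_M(u)-z_*(u)=\beta K(u)(M-\Delta_*)h(u)$; by realizability this is the same vector as the one used in the definitions of $G_{\F}$ and $Q_{\F}$. Centering is harmless here: both the KL and the range $R(\cdot)$ are invariant under adding a constant vector, and $H_*\one=0$ makes $d^\trans H_*d$ invariant as well. So we may work with the uncentered $d$ throughout.

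Pointwise, Theorem~\ref{thm:fisher-comparison} gives
\[
c_-(R(d))\,d^\trans H_*d\le \KL(p_*\|p_M)\le c_+(R(d))\,d^\trans H_*d .
\]
Since $R(d)\le R_0$ almost surely for $M\in\mathcal C_{r,R_0}$, and $c_-$ is nonincreasing while $c_+$ is nondecreasing, this sharpens to $c_-(R_0)\,d^\trans H_*d\le\KL\le c_+(R_0)\,d^\trans H_*d$ almost surely. The quadratic form unpacks as
\[
d^\trans H_*(u)d=h^\trans(M-\Delta_*)^\trans G_{\F}(u)(M-\Delta_*)h .
\]
Taking expectations therefore yields
\[
c_-(R_0)\,Q_{\F}(M-\Delta_*)\le\E\KL(p_*\|p_M)\le c_+(R_0)\,Q_{\F}(M-\Delta_*).
\]
All quantities are nonnegative, so the expectations are well defined in $[0,\infty]$.

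Finally, take the infimum over the same nonempty class $\mathcal C_{r,R_0}$ on all three sides. The lower inequality holds for every $M$, so the infimum of the KL is at least $c_-(R_0)$ times the infimum of $Q_{\F}$, which is $c_-(R_0)\,T^{\F}_{r,R_0}$. For the upper inequality, $\mathcal E_{r,R_0}\le c_+(R_0)\,Q_{\F}(M-\Delta_*)$ for each $M$, so infimizing the right side gives $c_+(R_0)\,T^{\F}_{r,R_0}$. Nonemptiness guarantees that the infima are not over an empty set.

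The only delicate point is the monotonicity step and the edge case $R_0=0$. The monotonicity was justified just before the corollary via the integral representations. When $R_0=0$ the constants are $1/2$ by continuity, and in that case $d$ is constant, so both sides vanish. Beyond this, the argument is bookkeeping.
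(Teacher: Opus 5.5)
Your proposal is correct and follows the paper's proof. You apply Theorem~\ref{thm:fisher-comparison} pointwise, use the monotonicity of $c_\pm$ to replace $R(d)$ by $R_0$, take expectations, and infimize over $\mathcal C_{r,R_0}$; your direct infimization for the upper bound is equivalent to the paper's $\eta$-minimizing-sequence argument, and your remarks on centering and the case $R_0=0$ are harmless extra bookkeeping.
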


\begin{proof}
Apply Theorem~\ref{thm:fisher-comparison} pointwise to every admissible
candidate and use monotonicity of $c_-$ and $c_+$.  Taking the infimum gives
the lower bound.  For the upper bound, apply the inequality to an
$\eta$-minimizing sequence for $T^{\F}_{r,R_0}$ and let $\eta\downarrow0$.
\end{proof}

Under independence, put $G_{\F}=\E G_{\F}(u)$,
$D_{\F}=G_{\F}^{1/2}\Delta_*\Sigma^{1/2}$, and
$T_r^{\F}=\sum_{j>r}\sigma_j(D_{\F})^2$.  Lemma~\ref{lem:weighted-svd}
gives $T_r^{\F}$ as the unrestricted rank-$r$ Fisher tail.  Since
$\mathcal C_{r,R_0}$ is a subset of the unrestricted class,
\begin{equation}
 \mathcal E_{r,R_0}\ge c_-(R_0)T_r^{\F}.
\end{equation}
If the Fisher-weighted SVD optimizer belongs to $\mathcal C_{r,R_0}$, it also
attains the constrained quadratic infimum, yielding the matching upper bound
$\mathcal E_{r,R_0}\le c_+(R_0)T_r^{\F}$.

\section{High-mass lower bounds}
\label{app:highmass}

\begin{lemma}[Conditional KL decomposition]
\label{lem:conditional-kl}
For positive distributions $p,q$ and nonempty $S$, put
$s=p(S)$ and $t=q(S)$.  Then
\begin{equation}
 \KL(p\|q)
 =\operatorname{kl}_{\rm Bern}(s\|t)
  +s\KL(p^S\|q^S)
  +(1-s)\KL(p^{S^c}\|q^{S^c}).
 \label{eq:app-conditional-kl}
\end{equation}
If $p=\softmax(z)$ and $q=\softmax(w)$, then
$p^S=\softmax(z_S)$ and $q^S=\softmax(w_S)$.
\end{lemma}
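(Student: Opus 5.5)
The plan is a direct computation with the log-ratio: split the sum defining $\KL(p\|q)$ over $S$ and $S^c$, then factor each likelihood ratio into a mass ratio times a conditional ratio. Since $p,q$ are positive, $s,t\in(0,1]$ and the conditionals $p^S_i=p_i/s$ and $q^S_i=q_i/t$ for $i\in S$ are well defined. For $i\in S$ we have
\begin{equation*}
 \log\frac{p_i}{q_i}=\log\frac{s}{t}+\log\frac{p^S_i}{q^S_i},
\end{equation*}
and analogously on $S^c$ with $1-s$ and $1-t$.

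First, I sum $p_i\log(p_i/q_i)$ over $i\in S$, writing $p_i=s\,p^S_i$. This gives $s\log(s/t)+s\KL(p^S\|q^S)$. Second, I do the same over $S^c$, which gives $(1-s)\log\frac{1-s}{1-t}+(1-s)\KL(p^{S^c}\|q^{S^c})$. Adding the two, the two logarithmic mass terms are exactly $\operatorname{kl}_{\rm Bern}(s\|t)$, and this yields~\eqref{eq:app-conditional-kl}.

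Two conventions need care. If $S^c$ is empty, then $s=t=1$, and the $S^c$ terms vanish under the convention $0\cdot(\text{anything})=0$. Positivity rules out the other degenerate cases.

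For the softmax claim, I write $p_i=e^{z_i}/\sum_j e^{z_j}$. Then $p^S_i=p_i/\sum_{j\in S}p_j=e^{z_i}/\sum_{j\in S}e^{z_j}$, because the global normalizer cancels. This is $\softmax(z_S)_i$, and the same computation applies to $q$ and $w$.

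I do not expect a real obstacle here. The only point needing attention is bookkeeping the edge case $S=[n]$, so that the Bernoulli term and the $S^c$ term are interpreted consistently.
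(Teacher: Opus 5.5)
Your proposal is correct and follows essentially the same route as the paper: factor $\log(p_i/q_i)$ on $S$ and on $S^c$ into a mass ratio plus a conditional ratio, sum with weights $p_i$, and note that the global softmax normalizer cancels under conditioning. Your explicit treatment of the edge case $S^c=\emptyset$, using the convention $0\cdot(\cdot)=0$, is a small detail that the paper leaves implicit.
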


\begin{proof}
For $i\in S$, write
$\log(p_i/q_i)=\log(s/t)+\log(p_i^S/q_i^S)$ and sum with weights $p_i$.
Repeat on $S^c$.  Conditional softmax follows because the full normalizer
cancels after conditioning.
\end{proof}

\begin{proof}[Proof of Theorem~\ref{thm:high-mass}]
By Lemma~\ref{lem:conditional-kl}, the full KL is at least
$s_*\KL(p_*^S\|p_M^S)$, where $s_*=p_*(S)$.  Apply
Theorem~\ref{thm:global-softmax} to the restricted logits.  Their target floor
is
\begin{equation}
 a_S=\min_{i\in S}\frac{p_{*,i}}{s_*}.
\end{equation}
Thus, for each candidate,
\begin{equation}
 \KL(p_*\|p_M)
 \ge\frac{s_*a_S}{2e^2}
 \psi(\|\PiC{S}(z_M-z_*)_S\|_2).
\end{equation}
Use $s_*\ge1-\delta$ and $a_S\ge a_{\core}$, take expectations, and then take
the infimum over the complete unrestricted rank-$r$ class.
\end{proof}

For the spectral form, let $K_S(u)$ contain the selected key rows and define
\begin{equation}
 G_{\core}(u)=\beta^2K_S(u)^\trans\PiC{S(u)}K_S(u),
 \qquad
 D_{\core}=G_{\core}^{1/2}\Delta_*\Sigma^{1/2},
\end{equation}
where $G_{\core}=\E G_{\core}(u)$.  Under independence, activation
hypercontractivity, and
$G_{\core}(u)\preceq\Lambda_{\core}G_{\core}$, the same proof as
Theorem~\ref{thm:main-spectral} gives
\begin{equation}
 \mathcal E_r
 \ge
 \frac{(1-\delta)a_{\core}}
 {2e^2(1+\Lambda_{\core}\sqrt{\kappa_h})}
 \psi\!\left(\sqrt{T_r^{\core}}\right),
\end{equation}
where
$T_r^{\core}=\sum_{j>r}\sigma_j(D_{\core})^2$.  Under two-sided dependent
geometry
$\lambda_{\core}\bar G_{\core}\preceq G_{\core}(u)
\preceq\Lambda_{\core}\bar G_{\core}$, apply
Theorem~\ref{thm:dependent-general} to obtain
\begin{equation}
 \mathcal E_r
 \ge
 \frac{(1-\delta)a_{\core}}
 {2e^2(1+(\Lambda_{\core}/\lambda_{\core})\sqrt{\kappa_h})}
 \psi\!\left(\sqrt{\lambda_{\core}\bar T_r^{\core}}\right).
\end{equation}

\section{Saturation constructions and lower bound}
\label{app:saturation}

For a boundary target matrix $P_\infty$ and an allowed centered score space
$L$, define its softmax closure rank as the least $r$ for which there is a
sequence $Z_m$ satisfying $\rank(Z_m)\le r$, every column of $Z_m$ lies in
$L$, and columnwise softmax converges to $P_\infty$.

\subsection{The linear-token achievable construction}

Write $k=3q+s$, $s\in\{0,1,2\}$, and choose the least $t$ such that
$n=4^t\ge k+1$.  Identify $\mathbb F_2^{2t}$ with $\mathbb F_4^t$.  The
one-dimensional $\mathbb F_4$ subspaces partition nonzero vectors into
triples $\{a,b,a+b\}$.  Select $q$ complete triples and $s$ characters from
one further triple.  For token row $x$, set
$\chi_a(x)=(-1)^{\langle a,x\rangle}$ and form the normalized character
matrix $K$.  Its columns are centered and orthonormal.  Context $j$ has
$h_j=e_j$, zero pretrained logits, and target logits $TKe_j$.

At every finite $T$, centered-softmax injectivity and $K^\trans K=I$ force an
exact update to equal $TI_k$, hence to have rank $k$.

On a complete triple, write the characters as $x,y,xy$, put
$g=T^{1/2}$ and $\alpha=T^{-3/4}$, and define
\begin{equation}
 b_2=(T,T+g,-T)^\trans,
 \quad b_3=(T,-T,T+g)^\trans,
 \quad b_1=\alpha(b_2+b_3),
 \quad B_T=[b_1\ b_2\ b_3].
 \label{eq:app-triple-block}
\end{equation}
$B_T$ has rank two.  On the first context's target support $x=1$, its scores
are $2T^{1/4}+2T^{-1/4}y$, whereas on $x=-1$ the score is $-2T^{1/4}$.
The within-support spread vanishes and the support gap diverges.  For the
second context the score is
$xT+y(T+g)-xyT$: it equals $T+g$ on $y=1$ and is at most $T-g$ on $y=-1$.
The third context is symmetric.  Scalar $T^{1/4}$ blocks handle leftovers.
The block-diagonal update has rank
$2q+s=k-\lfloor k/3\rfloor$.

For the seven-character strengthening, take the seven nonzero characters of
$\mathbb F_2^3$, set $\rho=T^{-2}$ and $L=T^5$, and define
\begin{equation}
C_\rho=
\begin{bmatrix}
1&\rho^2&0&0&\rho&0&0\\
0&\rho&1&\rho^2&\rho^2&0&0\\
0&0&0&\rho&\rho^2&1&0\\
0&\rho^2&0&\rho^2&0&0&1
\end{bmatrix}
\end{equation}
and
\begin{equation}
F_\rho=
\begin{bmatrix}
0&0&0&0\\
-\rho&-\rho^2&0&-\rho^2\\
0&-\rho&-\rho^2&-\rho^2\\
-1&0&-\rho^2&0\\
0&0&-1&-\rho\\
-\rho^2&-\rho^2&-\rho&0\\
0&-1&0&0\\
-\rho^2&0&0&-1
\end{bmatrix},
\qquad Z_T=LF_\rho C_\rho.
\label{eq:app-seven-factor}
\end{equation}
The factorization gives $\rank(Z_T)\le4$.  For completeness, the product before
the scalar factor $L$ is
{\scriptsize
\begin{equation}
F_\rho C_\rho=
\begin{bmatrix}
0&0&0&0&0&0&0\\
-\rho&-2\rho^3-\rho^4&-\rho^2&-2\rho^4&-\rho^2-\rho^4&0&-\rho^2\\
0&-\rho^2-\rho^4&-\rho&-2\rho^3-\rho^4&-\rho^3-\rho^4&-\rho^2&-\rho^2\\
-1&-\rho^2&0&-\rho^3&-\rho-\rho^4&-\rho^2&0\\
0&-\rho^3&0&-\rho-\rho^3&-\rho^2&-1&-\rho\\
-\rho^2&-\rho^3-\rho^4&-\rho^2&-\rho^2-\rho^4&-2\rho^3-\rho^4&-\rho&0\\
0&-\rho&-1&-\rho^2&-\rho^2&0&0\\
-\rho^2&-\rho^2-\rho^4&0&-\rho^2&-\rho^3&0&-1
\end{bmatrix}.
\label{eq:app-seven-product}
\end{equation}
}
In column $a$, the four rows in the positive halfspace $S_a$ are precisely
those with entries between $-3\rho^3$ and $0$; every remaining entry is at most
$-\rho^2$ for $0<\rho<1/3$.  Multiplying by $L$ gives, for
the positive halfspace $S_a$ of every character $a$,
\begin{equation}
 -3\rho^3L\le Z_T(x,a)\le0\quad(x\in S_a),
 \qquad
 Z_T(x,a)\le-\rho^2L\quad(x\notin S_a).
\end{equation}
Thus the within-support spread is at most $3/T$ and the support gap is at least
$T-3/T$.  Column centering preserves softmax and does not increase rank.

Identify $\mathbb F_2^{3t}$ with $\mathbb F_8^t$ and pack its nonzero vectors
into seven-character one-dimensional $\mathbb F_8$ subspaces.  If
$k=7m+s$, $0\le s<7$, the resulting path has rank at most
$4m+s=k-3\lfloor k/7\rfloor$ and uses $n<8(k+1)$ tokens.

It remains to verify KL convergence rather than only weak convergence of
supports.  For each target halfspace, the target off-support mass is
$\epsilon_T=(1+e^{2T/\sqrt n})^{-1}$.  Let $\eta_T$ be the candidate
off-support mass.  In both constructions, $\eta_T\to0$, within-support KL
vanishes, and the candidate score range is polynomial in $T$.  By
Lemma~\ref{lem:conditional-kl}, the only nontrivial term satisfies
\begin{equation}
 \epsilon_T\log(\epsilon_T/\eta_T)
 \le\epsilon_T[-\log\eta_T]
 =\epsilon_T\operatorname{poly}(T)\longrightarrow0.
\end{equation}
The conditional complement contribution obeys the same bound.  Taking the
better of the triple and septuple packings proves the linear-token statement
in Section~\ref{sec:saturation}.

\subsection{Proof of the exact closure-rank theorem}

\begin{proof}[Proof of Theorem~\ref{thm:exact-closure}]
Write $k=3q+s$, $s\in\{0,1,2\}$.  Work in a binary vector space $V$ whose
size is the least power of two above $2k^2+2$.  We construct
\begin{equation}
 \mathcal A=T_1\mathbin{\dot\cup}\cdots\mathbin{\dot\cup}T_q
 \mathbin{\dot\cup}R,
 \quad
 T_\ell=\{a_\ell,b_\ell,a_\ell+b_\ell\},
 \quad |R|=s,
 \label{eq:app-isolated-packing}
\end{equation}
so these are the only additive triples in $\mathcal A$.  Given a current set
$A_j$, let $F_j=A_j+A_j$.  Choose $u\notin F_j$ and then
$v\notin F_j\cup(u+F_j)\cup\{0,u\}$.  Fewer than $2k^2+2$ vectors are
forbidden, so the construction continues.  At most two leftovers can be
added outside the current sumset.  The number of token rows satisfies
$n<4k^2+8$.

Use normalized Walsh characters $\chi_a$ for $a\in\mathcal A$ as centered
orthonormal key columns.  The finite targets are again $TKe_a$, so every exact
finite-logits update is $TI_k$ and has rank $k$.  Applying the rank-two path
in~\eqref{eq:app-triple-block} independently to each $T_\ell$, with scalar
paths on leftovers, gives closure rank at most $2q+s$.

For the reverse inequality, consider any coefficient sequence $M_m$ whose
score softmaxes converge to the target boundary distributions.  Absorb the
fixed Walsh normalization into $M_m$.  For context $a$, write
\begin{equation}
 f_{m,a}(x)=\sum_{b\in\mathcal A}m_{b,a}^{(m)}\chi_b(x),
 \qquad t_{m,a}=m_{a,a}^{(m)}.
\end{equation}
On $S_a=\{x:\chi_a(x)=1\}$, the restrictions of $\chi_b$ and
$\chi_{b+a}$ agree.  Orthogonality of the restricted characters and convergence
to the uniform target give
\begin{equation}
 m_{b,a}^{(m)}+m_{b+a,a}^{(m)}\longrightarrow0
 \quad\text{for }b\notin\{0,a\}.
 \label{eq:app-restricted-fourier}
\end{equation}
All support scores equal $t_{m,a}+o(1)$, whereas the mean complement score is
$-t_{m,a}$.  The diverging support gap forces $t_{m,a}\to\infty$.

Triple isolation and~\eqref{eq:app-restricted-fourier} make every cross-block
coefficient $o(1)$.  If $\{a,b,c=a+b\}$ is the block containing $a$, put
$s_{m,a}=m_{b,a}^{(m)}$.  Then
$m_{c,a}^{(m)}=-s_{m,a}+o(1)$.  On the negative halfspace,
$\chi_c=-\chi_b$ and both signs occur, so
\begin{equation}
 \max_{S_a^c}f_{m,a}
 =-t_{m,a}+2|s_{m,a}|+o(1).
\end{equation}
The diverging gap implies $|s_{m,a}|\le t_{m,a}+o(t_{m,a})$.

Normalize each column by its positive diagonal coefficient:
\begin{equation}
 C_m=M_m\diag(t_{m,a}^{-1}:a\in\mathcal A).
\end{equation}
This preserves rank.  Diagonal entries become one, cross-block entries
converge to zero, and within-block entries remain bounded.  Pass to a
convergent subsequence.  The limit is block diagonal, with singleton blocks
$[1]$ and triple blocks of the form
\begin{equation}
 B(\alpha,\beta,\gamma)=
 \begin{bmatrix}
 1&\beta&\gamma\\
 \alpha&1&-\gamma\\
 -\alpha&-\beta&1
 \end{bmatrix}.
 \label{eq:app-triple-limit}
\end{equation}
Every real block~\eqref{eq:app-triple-limit} has rank at least two.  If it had
rank one, its $2\times2$ minors would force
$\alpha\beta=1$, $-\alpha\gamma=1$, and $\beta\gamma=1$.  The first and third
equalities imply $\alpha=\gamma$, contradicting $-\alpha\gamma=1$ over the
reals.  The limiting normalized matrix therefore has rank at least
$2q+s=k-\lfloor k/3\rfloor$.  Since the set of matrices of rank at most $r$
is closed, every approximating sequence has rank at least $2q+s$.  This
matches the construction.
\end{proof}

\section{Fused multi-head proofs}
\label{app:multihead}

For nonnegative numbers $x_1,\ldots,x_H$, put
$s=(\sum_hx_h^2)^{1/2}$.  We first prove
\begin{equation}
 \psi\!\left(\sqrt{\sum_hx_h^2}\right)
 \le\sum_h\psi(x_h)
 \le\sqrt H\,\psi\!\left(\sqrt{\sum_hx_h^2}\right)
 \label{eq:app-psi-aggregate}
\end{equation}
for all $x_h\ge0$.  If $s\le1$, every $x_h\le1$ and
$\sum_h\psi(x_h)=s^2=\psi(s)$.  If $s>1$, then
$\psi(x_h)\ge x_h^2/s$: for $x_h\le1$ this follows from $s\ge1$, and for
$x_h>1$ it follows from $x_h\le s$.  Summing gives the left inequality.
Moreover, $\psi(x_h)\le x_h$, so Cauchy--Schwarz gives
$\sum_h\psi(x_h)\le\sum_hx_h\le\sqrt Hs=\sqrt H\psi(s)$.

\begin{proof}[Proof of Theorem~\ref{thm:multihead}]
Applying Theorem~\ref{thm:global-softmax} headwise and summing, then using
Equation~\eqref{eq:app-psi-aggregate}, gives the lower bound
\begin{equation}
 \frac{a_{\min}}{2e^2}\psi(X_{\MH})
 \le\sum_hD_h,
 \qquad X_{\MH}^2=\sum_hX_h^2.
\end{equation}
For the upper bound, the headwise inequalities give directly
\begin{equation}
 \sum_hD_h
 \le\sum_h\min\{X_h^2/4,\sqrt2X_h\}
 \le\min\{X_{\MH}^2/4,\sqrt{2H}X_{\MH}\}.
 \label{eq:app-mh-upper}
\end{equation}
Both constants in the final aggregation inequality are attained: the
quadratic branch when all $X_h$ are small, and the linear branch when the
errors are equal across heads.
The block definitions in~\eqref{eq:multihead-definitions} and independence
give
\begin{equation}
 \E X_{\MH}^2
 =\|G_{\MH}^{1/2}(M-\Delta_*^{\MH})\Sigma^{1/2}\|_\F^2.
\end{equation}
Weighted Eckart--Young--Mirsky makes the infimum of this quadratic objective
equal to $T_r^{\MH}$.  The leverage and fourth-moment assumptions imply
$\E X_{\MH}^4\le\Lambda_{\MH}^2\kappa_h(\E X_{\MH}^2)^2$ by the argument of
Lemma~\ref{lem:uniform-moment}.  Lemma~\ref{lem:moment-bridge} therefore gives
the robust lower bound
\begin{equation}
 \E\psi(X_{\MH})\ge
 \frac{\psi(\sqrt{T_r^{\MH}})}{1+\Lambda_{\MH}\sqrt{\kappa_h}}.
\end{equation}
For the weighted-SVD candidate, take expectations in
Equation~\eqref{eq:app-mh-upper}, use Cauchy--Schwarz on $\E X_{\MH}$, and
substitute $\E X_{\MH}^2=T_r^{\MH}$.  This gives the stated upper bound and
completes the proof of~\eqref{eq:multihead-law}.
\end{proof}

For independently parameterized head adapters, the feasible set is a Cartesian
product once integer ranks $r_h$ are fixed, so
\begin{equation}
 \inf_{\substack{\sum_hr_h\le R\\\rank(M_h)\le r_h}}
 \sum_h\E D_h
 =\min_{\sum_hr_h\le R}\sum_h\mathcal E_{h,r_h}.
\end{equation}
This proves the allocation identity.

Finally, let $y_{M,h}$ and $y_{*,h}$ be the value-weighted head outputs, let
$\mathcal D_h(u)$ be the diameter of the head's value vectors, and let $O_h$
be its output-projection block.  Total variation and Pinsker give
\begin{equation}
 \|y_{M,h}-y_{*,h}\|_2
 \le \mathcal D_h(u)\operatorname{TV}(p_{M,h},p_{*,h})
 \le \mathcal D_h(u)\sqrt{D_h/2}.
\end{equation}
By Cauchy--Schwarz across heads,
\begin{equation}
 \left\|\sum_hO_h(y_{M,h}-y_{*,h})\right\|_2^2
 \le\frac H2\sum_h\|O_h\|_{\op}^2\mathcal D_h(u)^2D_h(u).
\end{equation}

\section{Joint query/key proofs}
\label{app:jointqk}

For ordinary dot-product attention, define
\begin{equation}
 C(A,B)=K_0^\trans A+B^\trans(Q_0+A).
\end{equation}
The two summands have ranks at most $\rank(A)$ and $\rank(B)$, proving
Equation~\eqref{eq:qk-rank}.  The bound is sharp: for $p=r$, $d=2r$, take
\begin{equation}
 K_0=A=[I_r\ 0],
 \qquad Q_0=B=[0\ I_r].
\end{equation}
Then $C(A,B)$ has rank $2r$.

Every joint candidate also satisfies
\begin{equation}
 \rank(K_0^\trans Q_0+C(A,B))
 =\rank((K_0+B)^\trans(Q_0+A))\le p.
 \label{eq:app-width}
\end{equation}
Thus the actual joint class is contained in the effective rank-$s$ class,
$s=r_Q+r_K$, and in its width-aware subclass.

Let
\begin{equation}
 d_{A,B}(u)=\PiC{n(u)}\beta X(u)[C(A,B)-C_*]h(u)
\end{equation}
and let $\Psi^{\QK}_{r_Q,r_K}$ be the infimum of
$\E\psi(\|d_{A,B}\|_2)$ over the actual factor class.  Applying
the lower half of Theorem~\ref{thm:global-softmax} to each candidate gives
\begin{equation}
 \frac{a}{2e^2}\Psi^{\QK}_{r_Q,r_K}
 \le\mathcal E^{\QK}_{r_Q,r_K}.
 \label{eq:app-qk-robust}
\end{equation}

\begin{proof}[Proof of Theorem~\ref{thm:joint-qk}]
By rank containment, every effective error
$C(A,B)-C_*$ is a rank-$s$ candidate relative to $C_*$.  Weighted
Eckart--Young--Mirsky therefore gives
\begin{equation}
 \E\|d_{A,B}(u)\|_2^2\ge T_s^{\eff}
\end{equation}
for every feasible pair.  The assumed fourth-moment condition and
Lemma~\ref{lem:moment-bridge} imply
\begin{equation}
 \E\psi(\|d_{A,B}\|_2)
 \ge\frac{\psi(\sqrt{T_s^{\eff}})}{1+\sqrt\kappa}.
\end{equation}
Take the infimum and use~\eqref{eq:app-qk-robust} for the lower bound.

Let $\mathcal S_s^{\rm svd}$ be the nonempty set of weighted rank-$s$
optimizers and define
\begin{equation}
 \rho_{r_Q,r_K}
 =\inf_{C_s\in\mathcal S_s^{\rm svd}}
   \inf_{\substack{\rank(A)\le r_Q\\\rank(B)\le r_K}}
 \|G_X^{1/2}[C(A,B)-C_s]\Sigma^{1/2}\|_\F.
\end{equation}
Choose jointly $\eta$-minimizing sequences in these two infima.  The weighted
seminorm triangle inequality gives
\begin{equation}
 \left\{
 \E\|d_{A,B}(u)\|_2^2
 \right\}^{1/2}
 \le\sqrt{T_s^{\eff}}+\rho_{r_Q,r_K}+o(1).
\end{equation}
Apply the pointwise global upper bound and Equation~\eqref{eq:app-robust-upper},
then let $\eta\downarrow0$.
\end{proof}

The realizability price vanishes in several checkable cases.  If
$\rank(C)\le r_Q$ and $\col(C)\subseteq\col(K_0^\trans)$, then
\begin{equation}
 A=(K_0^\trans)^\dagger C,
 \qquad C(A,0)=C,
 \qquad \rank(A)\le\rank(C).
\end{equation}
The symmetric statement for keys uses
$B^\trans=CQ_0^\dagger$ when $\row(C)\subseteq\row(Q_0)$.  More generally,
suppose $C=C_Q+C_K$, choose $A_Q$ with
$K_0^\trans A_Q=C_Q$ and $\rank(A_Q)\le r_Q$, and assume
$\row(C_K)\subseteq\row(Q_0+A_Q)$ with $\rank(C_K)\le r_K$.  Then
\begin{equation}
 B_K^\trans=C_K(Q_0+A_Q)^\dagger
\end{equation}
has rank at most $r_K$ and satisfies
$C(A_Q,B_K)=C_Q+C_K$.  These projector identities prove the one-sided and
sequential realization claims.

An always-valid factor-dependent upper bound follows from
\begin{equation}
 C(A,B)-C(A_*,B_*)
 =(K_0+B)^\trans(A-A_*)+(B-B_*)^\trans(Q_0+A_*).
\end{equation}
Submultiplicativity yields
\begin{align}
 &\|G_X^{1/2}[C(A,B)-C_*]\Sigma^{1/2}\|_\F\\
 &\quad\le
 \|G_X^{1/2}(K_0+B)^\trans\|_{\op}
 \|(A-A_*)\Sigma^{1/2}\|_\F\\
 &\qquad+
 \|(Q_0+A_*)\Sigma^{1/2}\|_{\op}
 \|G_X^{1/2}(B-B_*)^\trans\|_\F.
\end{align}
Combining this with~\eqref{eq:app-robust-upper} and the global softmax upper
bound gives the constructive target-factor upper bound.  Weighted truncated
SVDs of the actual target factors make the two Frobenius errors equal to their
corresponding spectral tails.

For fused multi-head joint adaptation, every head block satisfies
$\rank C_h(A_h,B_h)\le s$.  Applying weighted Eckart--Young headwise and then
the aggregate moment bridge gives
\begin{equation}
 \mathcal E^{\MH,\QK}_{r_Q,r_K}
 \ge\frac{a_{\min}}{2e^2(1+\sqrt{\kappa_{\MH}})}
 \psi\!\left(\sqrt{\sum_hT_{h,s}^{\eff}}\right).
\end{equation}
This bound deliberately grants every head the full allowance $s$ and is
therefore safe but potentially loose.